\documentclass{article}
\usepackage{ijcai26}

\usepackage{times}
\usepackage{soul}
\usepackage{url}
\usepackage[hidelinks]{hyperref}
\usepackage{orcidlink}
\usepackage[utf8]{inputenc}
\usepackage[small]{caption}
\usepackage{graphicx}
\usepackage{amsmath}
\usepackage{amssymb}
\usepackage{amsthm}
\usepackage{booktabs}
\usepackage{float}
\usepackage{array}
\newcolumntype{L}[1]{>{\raggedright\arraybackslash}p{#1}}
\usepackage{algorithm}
\usepackage{algorithmic}
\usepackage{enumitem}
\usepackage{xcolor}
\usepackage{tikz}
\usetikzlibrary{decorations.pathmorphing}
\usepackage{pgfplots}
\usepackage[capitalise,noabbrev]{cleveref}

\pgfplotsset{compat=1.17}

\newtheorem{theorem}{Theorem}
\newtheorem{lemma}{Lemma}
\newtheorem{corollary}{Corollary}
\newtheorem{definition}{Definition}

\newcommand{\Bw}{B_w}
\newcommand{\capw}{\mathrm{cap}}
\newcommand{\terr}{\mathrm{terr}}
\newcommand{\bx}[2]{$#1{\times}#2$}     % board size, e.g. \bx{5}{5}
\newcommand{\dfpn}{df-pn}

\newcommand{\gogrid}[2]{\foreach \c in {1,...,#2} {\draw[black!55,line width=0.3pt] (\c,1) -- (\c,#1);}
  \foreach \r in {1,...,#1} {\draw[black!55,line width=0.3pt] (1,\r) -- (#2,\r);}}
\newcommand{\bstones}[1]{\foreach \x/\y in {#1} {\fill (\x,\y) circle (0.46);}}
\newcommand{\wstones}[1]{\foreach \x/\y in {#1} {\filldraw[fill=white,draw=black,line width=0.5pt] (\x,\y) circle (0.46);}}
\newcommand{\golabel}[3]{\node[fill=white,inner sep=0.4pt,font=\footnotesize\bfseries] at (#1,#2) {#3};}
\newcommand{\bnum}[3]{\fill (#1,#2) circle (0.46); \node[text=white,font=\scriptsize\bfseries] at (#1,#2) {#3};}
\newcommand{\wnum}[3]{\filldraw[fill=white,draw=black,line width=0.5pt] (#1,#2) circle (0.46); \node[font=\scriptsize\bfseries] at (#1,#2) {#3};}
\newcommand{\gocoords}[2]{% rows, comma-separated column letters
  \foreach \l [count=\i] in {#2} {\node[font=\scriptsize] at (\i,0.2) {\l};}
  \foreach \r in {1,...,#1} {\node[font=\scriptsize] at (0.25,\r) {\r};}}
\newcommand{\gocoordsfour}{%
  \foreach \l [count=\i] in {A,B,C,D,E,F,G} {\node[font=\scriptsize] at (\i,0.2) {\l};}
  \foreach \r in {1,2,3,4} {\node[font=\scriptsize] at (0.25,\r) {\r};}}

\title{Solving Go under Positional Superko:\\Sound Proof Transfer and New Board Values}

\author{
\begin{tabular}[t]{@{}c@{}}
Sami Kharma\,\orcidlink{0000-0003-0963-3771}\\
{\rm Zuse Institute Berlin}\\
{\rm Berlin, Germany}\\
{\rm kharma@zib.de}
\end{tabular}
\hspace{2.5cm}
\begin{tabular}[t]{@{}c@{}}
Florian Schintke\,\orcidlink{0000-0003-4548-788X}\\
{\rm Zuse Institute Berlin}\\
{\rm Berlin, Germany}\\
{\rm schintke@zib.de}
\end{tabular}
}

\hypersetup{pdftitle={Solving Go under Positional Superko: Sound Proof Transfer and New Board Values},
  pdfauthor={Sami Kharma, Florian Schintke}}

\begin{document}

\maketitle

\begin{abstract}
Positional superko (PSK) forbids recreating any earlier whole-board position.
It is part of the Chinese rules as written and of the rules used in computer Go.
Yet almost all solved Go boards were computed under a simpler basic-ko rule set.
PSK makes solving hard: legality depends on the whole game history, and this breaks the three tools that make solving tractable, namely transposition tables (the graph-history interaction problem), static life-and-death recognition, and safe move pruning.
We present Sanko, a parallel depth-first proof-number solver that restores all three soundly.
Its core is a cheap proof-transfer theorem: a stored proof stays valid on a new path unless a board that the proof's \emph{winner} must create has already occurred.
Sanko proves \bx{5}{5} Go in 16 minutes with six threads, where the previous state of the art needed 5.25 hours with six processes, and in about a minute and a half on one 192-thread machine.
It also establishes the first PSK values of twenty-three further boards, eleven of which were unsolved under any rules.
Superko changes game-theoretic values: \bx{4}{7}, a full-board win under basic ko, is worth only 12 points under PSK, and we pinpoint the move at which the previously published optimal line fails.
Unlike earlier solutions of empty Go boards, we release the solver source together with the command line and outcome of every run behind our values and measurements, so that all reported values can be reproduced.
\end{abstract}

% ===========================================================================
\section{Introduction}

Solving games exactly is a long-standing benchmark for search algorithms.
Checkers has been solved~\cite{schaeffer2007checkers}, and so have Hex on small boards~\cite{henderson2009hex} and many smaller games.
For Go, computer proofs exist for many boards up to 30 intersections~\cite{vanderwerf2003small,vanderwerf2009rect}.
However, these classic Go results were computed under a special-purpose rule set designed to make a position's value independent of how it was reached.
Specifically, they use the basic ko rule, let a pass lift a ko ban, end a game at a repeated position and score the cycle by its passes~\cite{vanderwerf2004thesis,vanderwerf2009rect}.

Superko, the rule that forbids recreating earlier whole-board positions, is part of everyday play.
The Chinese rules forbid recreating any earlier whole-board position~\cite{chinese1988rules}, the American and New Zealand rules forbid it only with the same player to move (situational superko)~\cite{aga1991rules,nzrules}, and computer Go commonly uses the Chinese form, for example in the Tromp--Taylor rules~\cite{tromp2tt}.
Solving under superko is hard, however, and it has received far less attention than basic ko.
We approach solving Go under said \emph{positional superko} (PSK), which forbids recreating any earlier whole-board position.
Legality of a move therefore depends on the entire history of the game.
This rule breaks three pillars of game solving:
\begin{itemize}
\item \textbf{Transposition tables.}
  Reusing the result of a position reached along another path becomes unsound.
  This is the graph-history interaction (GHI) problem~\cite{campbell1985ghi,kishimoto2004ghi}.
\item \textbf{Static knowledge.}
  Benson-style life and territory recognition~\cite{benson1976life} assumes that the owner can always capture intruders.
  Under PSK those captures may be illegal.
\item \textbf{Pruning.}
  Moves that never help under basic ko, such as filling one's own eye, can become decisive tempo moves~\cite{vanderwerf2004thesis}.
\end{itemize}
The cost of losing these advantages is very large.
Even with situational superko, the MIGOS solver needed 1.1 hours for \bx{4}{4}, against 3.3 seconds with basic ko~\cite{vanderwerf2003small}. Now, Sanko solves \bx{4}{4} so fast that it no longer suits for performance comparisons.
The first PSK solution of \bx{5}{5}, by Expected Work Search (EWS), took 5.25 hours with six processes~\cite{randall2024ews}, a board we solve in minutes.
Beyond \bx{5}{5}, no board with at least two rows had been solved under PSK.
It was also open whether the basic-ko values carry over.

This paper shows that PSK solving can be both sound and fast.
Our contributions are:
\begin{itemize}
\item \textbf{A proof-transfer theorem for PSK} (\cref{sec:transfer}).
  Using the observation that only boards created by a proof's \emph{winner} can invalidate it, five small attributes per stored proof allow for soundness-preserving proof reuse under PSK.
  Proof-transfer under this theorem cuts search effort by a factor of about 6{,}000 compared to reusing results only in the identical superko game.
\item \textbf{Static knowledge that is sound under PSK} (\cref{sec:static}).
  We introduce \emph{pure-eye analysis}, a variant of Benson's algorithm that is sound under PSK play-out, prove the two conditions it needs beyond Benson's classic analysis, and give acyclic local region games.
\item \textbf{Sanko, a parallel solver} (\cref{sec:solver}).
  It runs \dfpn{} on shared lock-free tables, and checks proof identity on 128 bits. We show that the probability of a hash collision is magnitudes below that of hardware errors, even on our largest runs.
  We also verify against an oracle with no static knowledge.
\item \textbf{New results} (\cref{sec:experiments}).
  \bx{5}{5} takes 16 minutes on six threads, 20$\times$ faster than EWS.
  We give the first PSK values of \bx{4}{5}, \bx{3}{7}, \bx{3}{8}, \bx{3}{9}, \bx{4}{6}, \bx{4}{7}, \bx{5}{6}, \bx{3}{10}, \bx{2}{10}--\bx{2}{15} and \bx{1}{10}--\bx{1}{18}.
  Of these, \bx{3}{10}, \bx{2}{12}--\bx{2}{15} and \bx{1}{13}--\bx{1}{18} were unsolved under any rules. A quick overview shows \cref{fig:map}.
\item \textbf{A rule-dependent value} (\cref{sec:4x7}).
  \bx{4}{7} is a full-board win under basic ko (value 28), but is worth only 12 points under PSK.
  We locate the move at which the published optimal line~\cite{vanderwerf2009rect} fails.
\item \textbf{An open, reproducible artifact} (\cref{sec:related}).
  Earlier solutions of empty Go boards were published without their solvers, so none of them can be reproduced independently.
  We release the solver source and the exact command line and outcome of every solver run behind our values and measurements.\footnote{\url{https://github.com/cookiephone/sanko}}
\end{itemize}

\begin{figure}[t]
\centering
\begin{tikzpicture}[x=4.4mm,y=4.4mm,
  cell/.style={draw=black!60,line width=0.3pt,minimum size=4.1mm,inner sep=0pt,font=\scriptsize}]
\node[cell,fill=white] at (4,5) {4};
\node[cell,fill=white] at (5,5) {0};
\node[cell,fill=white] at (6,5) {1};
\node[cell,fill=white] at (7,5) {2};
\node[cell,fill=white] at (8,5) {3};
\node[cell,fill=white] at (9,5) {0};
\node[cell,fill=black!18] at (10,5) {1};
\node[cell,fill=black!18] at (11,5) {2};
\node[cell,fill=black!18] at (12,5) {2};
\node[cell,fill=black!70,text=white] at (13,5) {0};
\node[cell,fill=black!70,text=white] at (14,5) {1};
\node[cell,fill=black!70,text=white] at (15,5) {2};
\node[cell,fill=black!70,text=white] at (16,5) {1};
\node[cell,fill=black!70,text=white] at (17,5) {0};
\node[cell,fill=black!70,text=white] at (18,5) {1};
\node[cell,fill=white] at (2,4) {1};
\node[cell,fill=white] at (3,4) {0};
\node[cell,fill=white] at (4,4) {8};
\node[cell,fill=white] at (5,4) {10};
\node[cell,fill=white] at (6,4) {12};
\node[cell,fill=white] at (7,4) {14};
\node[cell,fill=white] at (8,4) {16};
\node[cell,fill=white] at (9,4) {18};
\node[cell,fill=black!18] at (10,4) {4};
\node[cell,fill=black!18] at (11,4) {6};
\node[cell,fill=black!70,text=white] at (12,4) {4};
\node[cell,fill=black!70,text=white] at (13,4) {6};
\node[cell,fill=black!70,text=white] at (14,4) {4};
\node[cell,fill=black!70,text=white] at (15,4) {6};
\node[cell,fill=white] at (3,3) {9};
\node[cell,fill=white] at (4,3) {4};
\node[cell,fill=white] at (5,3) {15};
\node[cell,fill=white] at (6,3) {18};
\node[cell,fill=black!18] at (7,3) {21};
\node[cell,fill=black!18] at (8,3) {24};
\node[cell,fill=black!18] at (9,3) {5};
\node[cell,fill=black!70,text=white] at (10,3) {8};
\node[cell,fill=white] at (4,2) {2};
\node[cell,fill=black!18] at (5,2) {20};
\node[cell,fill=black!18] at (6,2) {8};
\node[cell,fill=black!18,draw=black,line width=1.4pt] at (7,2) {12};
\node[cell,fill=white] at (5,1) {25};
\node[cell,fill=black!18] at (6,1) {4};
\foreach \c in {2,...,18} {\node[font=\tiny] at (\c,0.25) {\c};}
\foreach \r in {1,...,5} {\node[font=\tiny] at (1.2,6-\r) {\r};}
\node[font=\scriptsize,anchor=east] at (1.6,5.9) {rows};
\node[font=\scriptsize] at (10,-0.35) {columns};
\begin{scope}[shift={(8.4,1.85)}]
\node[cell,fill=black!18,minimum size=3mm] at (0,0) {}; \node[font=\scriptsize,anchor=west] at (0.4,0) {new under PSK};
\node[cell,fill=black!70,minimum size=3mm] at (0,-0.8) {}; \node[font=\scriptsize,anchor=west] at (0.4,-0.8) {no earlier value};
\node[cell,fill=white,minimum size=3mm] at (6.2,0) {}; \node[font=\scriptsize,anchor=west] at (6.6,0) {known or small};
\end{scope}
\end{tikzpicture}
\caption{Values under positional superko of all boards with four to 30 points and at most 18 columns.
The value is Black's winning margin with optimal play.
Besides \bx{2}{2}, whose superko value was known~\protect\cite{vanderwerf2003small}, \bx{4}{7} (framed) is the one board whose value differs from its basic-ko value.
The smaller boards agree with earlier results.
Linear boards from \bx{1}{19} to \bx{1}{30} remain open.}
\label{fig:map}
\end{figure}

% ===========================================================================
\section{Background}\label{sec:background}

We use the Chinese rules as written~\cite{chinese1988rules}, whose repetition ban is the positional superko of EWS~\cite{randall2024ews}.
Black moves first on an \bx{R}{C} board.
A move is a pass or the placement of a stone.
Opponent chains without liberties are captured, and suicide is illegal.
The ban on repeating a whole-board position is \emph{positional superko}: a stone move may not recreate any board that occurred earlier in the game.
Passes are always legal.
Two consecutive passes end the game, which is scored by area: each player counts their stones plus the empty regions bordered only by their stones.
Where human players agree on dead stones, we play the game out, so stones that remain on the board count as alive.
Under area scoring, playing in ones own territory costs neither side anything, so this does not change any value.
The komi $k$ (a half-integer) is added to White's score.
The \emph{value} $v$ of a board is Black's margin under optimal play.
We establish it with two proofs: a Black win at komi $v-\frac12$ and a White win at $v+\frac12$.
A \emph{full-board} win ($v=R\cdot C$) needs only the first.

MIGOS~\cite{vanderwerf2003small,vanderwerf2009rect,vanderwerf2004thesis} formalizes the Chinese rules, but replaces their ban on repeated positions by a basic ko rule and special conventions for cycles, passes and dead stones.

A \emph{node} is a position together with the history that leads to it.
Under PSK, two nodes with the same position but different histories may have different values.
Proof-number search~\cite{allis1994pns} treats the game as an AND/OR graph of nodes.
Each node carries a proof number $\phi$ and a disproof number $\delta$, which estimate the effort to prove or disprove it.
Depth-first proof-number search (\dfpn{})~\cite{nagai2002dfpn} explores the graph with thresholds, with the $1{+}\varepsilon$ refinement of~\cite{pawlewicz2007eps}.
A transposition table (TT) stores proven and unproven nodes, and its hit rate is what makes large proofs feasible.
An \emph{expansion} is one call of \dfpn{}'s recursive procedure on a node.
The call generates the legal moves and classifies each child as terminal, static leaf, stored proof, or unproven with initial proof numbers.
It stops early at a child that already proves the node, and then repeatedly descends into the most promising child until the node's thresholds are exceeded.
Each call counts once, including later calls on a node that was already expanded.
We measure search effort in expansions.
The \emph{revisit factor} of a search is the number of expansions divided by the number of distinct nodes expanded.
EWS reports the number of nodes in its best-first search tree, and MIGOS counts every node visit of its iterative-deepening search~\cite{vanderwerf2009rect}, so the three counts are related but not identical.

Superko has three effects.
\begin{itemize}
\item \emph{Transpositions are unsound.}
  Two paths to the same board can have different sets of earlier boards.
  A ko recapture, or any capture that recreates an earlier board, is legal on one path and illegal on the other, so the two nodes can have different values.
  Storing `won' for one path and reusing it on the other is unsound: this is the GHI problem~\cite{campbell1985ghi,breuker2001ghi,kishimoto2004ghi}.
  The safe alternative, keying the table on the position and the set of earlier boards, almost eliminates reuse (\cref{sec:ablation}).
  This is why superko slowed MIGOS down by more than 1{,}000$\times$ on \bx{4}{4}~\cite{vanderwerf2003small,kishimoto2004ghi}.
\item \emph{Static knowledge from basic ko is unsound.}
  Benson's unconditional life~\cite{benson1976life} certifies groups that the opponent can never capture.
  Claiming their \emph{territory} however, means the owner must actually capture intruders.
  \Cref{sec:static} shows that superko can forbid those captures.
  In addition, a static verdict taken just after the owner's own pass is wrong, because the opponent can pass and end the game immediately at the current position.
\item \emph{Pruning becomes incomplete.}
  Under PSK, any board-changing move can make a later capture legal, even filling one's own eye (a move that is never useful under basic ko).
  \Cref{sec:static} gives an example where this is the only winning move.
\end{itemize}

% ===========================================================================
\section{Sound Proof Transfer under Superko}\label{sec:transfer}

Kishimoto and M\"uller~\cite{kishimoto2004ghi,kishimoto2005dfpn} solve GHI in general: they re-check, or \emph{simulate}~\cite{kawano1996simulation}, a stored proof on the new path, at a cost that grows with the proof.
We instead give a sufficient condition for reuse under PSK that can be checked in time linear in the path length, independent of the proof's size.

A \emph{history} $H$ is the sequence of positions from the empty board to the current node $N$.
We also write $H$ for its set of boards.
A proof $T$ that player $w$ wins from $N$ is a finite tree.
At $w$-nodes it picks one move, and at opponent nodes it contains every legal move.
Its leaves are terminal wins for $w$ or certified static wins (\cref{sec:static}).
We say \emph{line} for a sequence of consecutive moves from a given position, so a path in the game tree.

\begin{definition}
For a proof~~$T$ of a win for $w$:
\begin{itemize}
\item $\Bw(T)$ is the set of boards created by $w$'s stone moves in $T$.
\item $X(T)$ is the set of boards $b\in H$ that made some opponent move in $T$ illegal (\emph{exclusion targets}).
\item $\terr(T)$ is the union of the territories certified at $T$'s static leaves.
\item $\capw_w(H)$ is the set of points at which a $w$ stone was captured along $H$.
\end{itemize}
\end{definition}

\begin{theorem}[Transfer]\label{thm:transfer}
Let $T$ prove that $w$ wins from $N$ under history $H$.
Then $T$ also proves that $w$ wins from $N$ under any history $H'$ that satisfies
\begin{enumerate}[label=(\roman*)]
  \item $\Bw(T)\cap H'=\emptyset$,
  \item $X(T)\subseteq H'$,
  \item $\capw_w(H')\cap\terr(T)=\emptyset$.
\end{enumerate}
\end{theorem}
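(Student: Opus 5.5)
The proof goes by induction on the structure of $T$, from the leaves upward, with the induction hypothesis taken over every node of $T$. Fix a node $M$ of $T$, reached from $N$ by a line $\ell$. Write $H_M=H\cdot\ell$ and $H'_M=H'\cdot\ell$ for the two histories of $M$, the original and the new one. The boards along $\ell$ are the same in both histories, so the two differ only in their prefixes $H$ and $H'$. The claim to prove at every node is the following. First, the subtree $T_M$ is a legal strategy under $H'_M$: every move $T$ picks for $w$ is legal there. Second, the set of opponent moves legal under $H'_M$ is contained in the set legal under $H_M$. Third, every leaf of $T_M$ is still a win for $w$ under $H'_M$. Together these say that $T_M$ is a proof under $H'_M$; at $M=N$ this is the theorem.

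\textbf{Moves of $w$.} A pass is always legal. A stone move by $w$ at $M$ creates a board $b\in\Bw(T)$. Under $H_M$ the board $b$ is not in $H\cdot\ell$, because the move is legal in $T$. By condition~(i), $b\notin H'$. Since $b$ is also not on $\ell$, it is not in $H'_M$. Capture and suicide depend only on the current board, so the move is legal under $H'_M$.

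\textbf{Opponent moves.} Take a stone move $m$ that is illegal under $H_M$ but was absent from $T$ for that reason. If it is illegal by suicide, it stays illegal under any history. Otherwise the board it would create is some $b\in H\cdot\ell$. If $b$ lies on $\ell$, it lies on $H'_M$ too. If $b$ lies in $H$, then $b\in X(T)$ by definition, so $b\in H'$ by condition~(ii). Either way $m$ is still illegal under $H'_M$. Hence the opponent has no new options, and every child that $T$ contains remains covered. Some opponent moves that were legal may now be illegal, which only removes children, so this is harmless. The branches that remain lead to nodes handled by the inductive hypothesis. Terminal leaves (two consecutive passes) are scored from the board alone, so they stay wins.

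\textbf{Static leaves: the main obstacle.} A certified static win at a leaf $L$ is a claim about the whole play-out from $L$ under PSK. I would therefore rely on the soundness statement of \cref{sec:static}, namely pure-eye analysis, and check which history-dependent facts it uses. My expectation is that the certificate needs two things. The owner must be able to capture intruders in $\terr(T)$ without recreating an earlier board. Such a recapture can be blocked only by a board in which a $w$ stone stood at that point and was later captured. The second fact is the acyclicity of the local region games, which is a property of the board alone. Condition~(iii) excludes exactly such earlier captured $w$ stones inside $\terr(T)$ along $H'$, so along $H'_L$ any such board could only arise inside $\ell$ or the play-out itself, as it already did under $H_L$. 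The delicate step is to phrase the static certificate's soundness lemma so that it depends on the history only through $\capw_w$ restricted to the territory. If it actually depends on more, I would fall back to arguing directly that any repeated board that blocks a capture by $w$ must contain a $w$ stone on a point of $\terr(T)$ that was captured earlier.
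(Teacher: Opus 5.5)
Your proposal is correct and follows the paper's proof essentially step for step. It uses induction over $T$, derives the legality of $w$'s stone moves from (i) together with their legality under $H$, rules out new opponent options because any excluding board from $H$ lies in $X(T)\subseteq H'$ by (ii), and handles static leaves through \cref{lem:static}, whose only history dependence is (C1). Your hedged static-leaf step is what the paper's invariant (c) makes precise: captures along the common line $\ell$ are the same under both histories and were already kept off the territory when the leaf was certified under $H$, so together with (iii) they give (C1) under $H'\cdot\ell$, while (C2), which you do not mention, depends only on the position and needs no further argument.
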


\begin{proof}[Proof sketch]
Induction over $T$ (full proof in \cref{app:proofs}).
The line inside $T$ is the same under $H$ and $H'$.
Only the prefix differs.
\begin{itemize}
\item
  A stone move of $w$ creates a board $b\in\Bw(T)$.
  That board was not in $H$ or on the line, because the move was legal.
  By~(i) it is not in $H'$ either, so the move stays legal.
\item
  An opponent move that is legal under $H'$ but was illegal under $H$ must recreate a board of $H\setminus H'$.
  Such a board would lie in $X(T)$, which~(ii) rules out.
  So the opponent gains no new options.
  Opponent moves that become illegal only remove options.
\item
  Terminal scores do not depend on the history.
  Static leaves need their own history condition (\cref{lem:static}), and (iii) supplies it on the new prefix.
\end{itemize}
\end{proof}

The theorem is asymmetric, in that only the \emph{winner's} boards matter.
If the loser would recreate a board of $H'$, that merely removes a losing option, and the proof survives (\cref{fig:transfer}).
Proofs are stored in negamax form, each for its own winner, so the theorem covers wins of both players.

\begin{figure}[t]
\centering
\begin{tikzpicture}[x=0.93cm,y=0.72cm,font=\footnotesize,
    wavy/.style={decorate,decoration={snake,amplitude=0.6mm,segment length=3mm}},
    win/.style={circle,fill,inner sep=0pt,minimum size=4.6pt},
    lose/.style={circle,draw,fill=white,inner sep=0pt,minimum size=4.6pt},
    same/.style={densely dashed,semithick}]
  \node[circle,draw,inner sep=1pt] (s) at (0,3.5) {$\varnothing$};
  \node[circle,draw,fill=black!10,inner sep=1.5pt] (n) at (0,1.3) {$N$};
  % the history the proof was found on, and the history it is reused on
  \draw[wavy] (s) .. controls (1.15,2.4) .. node[right=2pt]{$H$} (n);
  \draw[wavy] (s) .. controls (-1.15,2.4) .. coordinate[pos=0.3] (hl) coordinate[pos=0.72] (hw) (n);
  \node at (-0.42,2.4) {$H'$};
  % the proof below N
  \draw (n.south) -- (-1.2,-1.05) -- (1.2,-1.05) -- cycle;
  \node at (0.15,-0.72) {$T$};
  \node[win] (tw) at (-0.25,0.15) {};
  \node[win] at (0.42,-0.4) {};
  \node[lose] at (0.18,-0.05) {};
  \node[lose] (tl) at (-0.55,-0.6) {};
  % boards of T that also occur on H'
  \node[lose] (l) at (hl) {};
  \node[win] (w) at (hw) {};
  \draw[same,black!50] (l) to[out=215,in=150] (tl);
  \draw[same,red!70!black] (w) to[out=255,in=155] (tw);
  % the winner move in T that H' makes illegal
  \draw[red!70!black,thick] (tw) +(-0.14,-0.18) -- +(0.14,0.18) (tw) +(-0.14,0.18) -- +(0.14,-0.18);
  \node[anchor=east,align=right] at (-1.4,2.95) {loser's board of $T$:\\transfer is safe};
  \node[anchor=east,align=right] at (-1.4,1.75) {winner's board of $T$:\\\textcolor{red!70!black}{transfer rejected}};
  % legend
  \node[anchor=west,align=left] at (1.25,0.0) {\tikz\node[win]{}; created by the winner\\[1pt]\tikz\node[lose]{}; created by the loser\\[1pt]\tikz[baseline=-0.6ex]\draw[same](0,0)--(0.32,0); same board};
\end{tikzpicture}
\caption{A proof $T$ stored at $N$ along history $H$ and reused along $H'$.
Both histories start at the empty board $\varnothing$.
A board that occurs both on $H'$ and in $T$ (dashed) makes $T$ invalid only if the winner creates it in $T$, because that winner move becomes illegal.
If the loser creates it, the loser merely loses an option.}
\label{fig:transfer}
\end{figure}

The exact sets are too large to store.
Each proof entry instead carries conservative summaries that compose bottom-up.
A won node takes its winning child's attributes plus the board its winning move creates, and a lost node takes the join of all children:
\begin{itemize}
\item a 64-bit Bloom filter~\cite{bloom1970} of the canonical hashes of $\Bw(T)$,
\item the stones common to all boards of $\Bw(T)$, and their minimum stone count,
\item the territory bitboard,
\item a flag that marks proofs with an exclusion target.
\end{itemize}
A strict ancestor board $a\in H'$ can lie in $\Bw(T)$ only if it hits the Bloom filter, contains the common stones, and has at least the minimum number of stones.
If any of these fails, (i) holds for $a$ (\cref{alg:transfer}).
Condition~(iii) is checked exactly against the capture bitboard of the current path.
Condition~(ii) holds because every proof with an exclusion target is stored \emph{path-specifically} and is never transferred, so a transferred proof has $X(T)=\emptyset$.
Path-specific proofs live under a \emph{twin key}, the position key combined with a non-linear hash of the set of earlier boards, and are reused only on the identical board with an identical set, which the solver recognizes by two independent such hashes.
Note that an XOR of Zobrist keys~\cite{zobrist1990} is not a sound set identifier.
Base entries are written once, and a proof without an exclusion target that finds its base slot taken by another path's proof goes to one of two alternate slots, where any path can reuse it through \cref{alg:transfer}.
Canonical hashing makes the test invariant under the board's symmetries.
Sibling moves are merged only under symmetries that map every board on the path onto itself.

\begin{algorithm}[t]
\caption{Accepting a stored proof at node $N$ on path $H'$}
\label{alg:transfer}
\begin{algorithmic}[1]
\REQUIRE entry $e$: winner $w$, Bloom filter $\beta$, common stones $C$, minimum stone count $m$, territory $\tau$, path-specific flag (set whenever $X(T)\neq\emptyset$)
\IF{$e$ is path-specific}
  \RETURN false \hfill\COMMENT{condition (ii)}
\ENDIF
\IF{$\capw_w(H') \cap \tau \neq \emptyset$}
  \RETURN false \hfill\COMMENT{condition (iii)}
\ENDIF
\FOR{each strict ancestor board $a$ on $H'$}
  \IF{$C \subseteq a$ \AND $|a| \ge m$ \AND $\mathrm{hash}(a) \in \beta$}
    \RETURN false \hfill\COMMENT{condition (i)}
  \ENDIF
\ENDFOR
\RETURN true \hfill\COMMENT{(i) to (iii) hold}
\end{algorithmic}
\end{algorithm}

% ===========================================================================
\section{Static Knowledge under Superko}\label{sec:static}

Leaves that end the search before the game ends are essential: on \bx{5}{5}, kill-all proofs would otherwise run through every mop-up capture.
Under PSK and play-out scoring they need two corrections.

Let $w$ be a player and $o$ the opponent.
A \emph{region} is a maximal connected set of non-$w$ points.
A region $E$ is a \emph{pure eye} of a $w$-chain $K$ if \emph{every} point of $E$, empty or occupied by $o$, is adjacent to $K$.
Let $X^*$ be the largest set of $w$-chains in which each chain has two pure eyes bordered only by chains of $X^*$.
Let $T^*$ be the union of the regions bordered only by $X^*$ in which every point is adjacent to $X^*$.
We call $X^*$ and $T^*$ the \emph{pure-eye analysis}.
It follows Benson's fixpoint~\cite{benson1976life} but replaces `vital' by `pure', and together with the conditions of \cref{lem:static} it is sound under PSK play-out.

\begin{lemma}[Static leaves under PSK]\label{lem:static}
Suppose (C1) no $w$ stone was ever captured on a point of $T^*$ during the history, and (C2) the last move was not a pass by $w$ with $o$ to move.
Then $w$ can enforce a final area of at least $|X^*|+|T^*|$.
\end{lemma}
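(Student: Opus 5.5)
We prove the lemma by giving an explicit strategy for $w$ and checking it against every line the opponent can play. $w$ never plays inside $T^*$ or inside the pure eyes of $X^*$, except to capture an $o$ stone there when it is legal and useful. Otherwise $w$ passes. We then show that every play-out under this strategy ends with all chains of $X^*$ on the board and every point of $T^*$ empty or $w$-occupied, and with each region of $T^*$ bordered only by $w$ stones. Area scoring then gives at least $|X^*|+|T^*|$.

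\textbf{Step 1: the chains of $X^*$ survive.} We first show that $o$ can never capture a chain of $X^*$, following Benson's fixpoint argument. Every chain $K\in X^*$ has two pure eyes bordered only by $X^*$. To remove the last liberty of $K$, $o$ must fill both eyes. Each eye point is adjacent to $K$, and also to the other $X^*$ chains bordering that eye. Hence the last empty point of an eye is, for $o$, either suicide or a capture of some $X^*$ chain. By induction on the order in which $X^*$ chains would lose their last liberty, the first such capture would need the last eye point of a chain that still has its other eye. Filling that point is suicide, which is illegal. This part of the argument uses no history at all, because superko only removes moves.

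\textbf{Step 2: $w$ can clear intruders in $T^*$.} Every point of a region $E\subseteq T^*$ is adjacent to $X^*$, so each $o$ chain in $E$ has all its liberties inside $E$. $w$ can fill those liberties, and the last fill is a capture rather than suicide.

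The real issue is superko. The capturing move must not recreate an earlier board. Here we use (C1). A board after the capture contains $w$ stones on points of $T^*$ that were just emptied of $o$, or $w$ stones in a configuration that arose by capture. The plan is to argue that a recreated board would require an earlier history board in which the same $o$ group had been captured by $w$ and then replayed. The key point is this: if $b$ occurred before and differs from the current board only inside $E$, then passing from $b$ back to the current configuration required $w$ stones in $E$ to be captured at some point. That contradicts (C1). To make this precise, I would compare stone counts on $E$ and use monotonicity. Under (C1), the set of $w$ stones in $T^*$ never shrinks. The total number of stones in $E$ changes only by placements and by captures of $o$, so every new capture produces a board with strictly more $w$ stones in $E$ than any earlier board. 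Such a board cannot be a repetition. I expect this potential-function argument, with the count of $w$ stones in $T^*$ as a strictly increasing invariant across $w$'s captures, to be the main obstacle. The subtlety is that $o$'s moves outside $T^*$ could in principle recreate matching outside parts, so the invariant must be stated on $T^*$ alone.

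\textbf{Step 3: termination and (C2).} The game must end with $w$ having cleared $T^*$. If $o$ passes while intruders remain, $w$ captures instead of passing, which is legal by Step 2. The game can only end after a $w$ pass followed by an $o$ pass, or the reverse, and $w$ passes only when $T^*$ contains no $o$ stones. (C2) excludes the one dangerous case: $w$ has just passed and $o$ passes immediately, ending the game before $w$ can react. Without (C2), intruders present at the current position would be counted as alive. With (C2), the current position is the first point at which $w$ can respond.

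Finiteness of play follows from superko, since the number of boards is finite. At the end, $X^*$ is on the board and $T^*$ holds only $w$ stones and empty points bordered by $X^*$. That gives the claimed bound.
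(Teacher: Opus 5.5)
Your Steps~2 and~3 are essentially the paper's argument. The paper also derives superko legality of $w$'s moves from the fact that a recreated board would need a $w$ stone on the new point $q\in T^*$ that was later removed. Permanence excludes this inside the continuation, and (C1) excludes it in the history; your monotone count of $w$ stones in $T^*$ is the same idea. Termination and (C2) are handled as you do.

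The gap is in Step~1. There you run Benson's static argument, treating the two pure eyes of each chain as fixed regions that only $o$ can fill. But your own strategy plays inside those eyes. A pure eye of an $X^*$ chain bordered only by $X^*$ is itself a region of $T^*$, so $w$ fills liberties and captures there. A move at $q\in E$ replaces $E$ by the components of $E\setminus\{q\}$ and turns the captured $o$ points into empty points. After that move the eyes supplied by the definition of $X^*$ no longer exist, so ``every $K\in X^*$ has two pure eyes'' can no longer be quoted. You need an invariant that both players' moves preserve: every $w$-chain containing an $X^*$ stone has two distinct pure eyes, bordered only by such chains, each containing an empty point. The $w$-move case is exactly where purity does its work:
\begin{itemize}
\item $q$ touches $K$ and joins it;
\item every captured point lay in $E$ and so still touches $K$;
\item a component of $E\setminus\{q\}$ with no empty point would consist of $o$-chains without liberties, which the move has just captured.
\end{itemize}

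As written, your Step~1 uses purity only where Benson's weaker condition (every \emph{empty} eye point touches $K$) already suffices. So the same text would ``prove'' the lemma for classic vital regions, which is false. That is the point of the paper's example in \cref{fig:static}b--c. Once $w$ captures the intruders, a freed point touches no stone of the chain and the remaining region is not even vital. Then $o$ re-enters with two liberties and reduces $K$ to its other eye.

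Your later steps also silently rely on this invariant. In Step~2 it shows that the intermediate, non-capturing liberty fills are not suicide: the new stone joins an $X^*$ chain that keeps an empty point in another eye. It also gives the permanence of $w$'s new stones in $T^*$ that your count assumes. (C1) covers only the history before the node. Inside the continuation, the fact that $w$ stones in $T^*$ are never removed comes from every point of $T^*$ touching $X^*$, so each new stone joins a chain that can never be captured.

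A minor slip: a pure eye of $K$ need not touch the \emph{other} chains bordering it at every point, since purity is relative to $K$.
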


\begin{proof}[Proof sketch]
The winner follows the strategy $\sigma$: if an $o$ stone lies in $T^*$, play on a liberty of an $o$-chain there, and otherwise pass.
Every chain containing $X^*$ stones keeps two pure eyes, each with an empty point, and so is never captured.
Pureness guarantees that the points freed by a capture still touch the chain.
Every $\sigma$ stone joins such a chain, so $w$'s stones in $T^*$ are permanent.
Hence a $\sigma$ move cannot repeat a board of the continuation.
It could repeat an earlier board only if a $w$ stone had stood on that point and been captured, which (C1) forbids.
(C2) prevents $o$ from ending the game by passing before $w$ has moved.
\end{proof}

Both conditions go beyond Benson's original analysis.
\begin{itemize}
\item Condition (C2) cannot be dropped.
  In \cref{fig:static}a, Black has just passed, White is to move, and the komi is 4.5.
  Benson's analysis gives Black all nine points, yet White passes, the area count is 3--3, and the komi decides for White.
\item The pure eye condition strengthens Benson's classic one, which only requires every \emph{empty} point to be adjacent to the chain.
  The top-left region of \cref{fig:static}b passes the classic test but is not a pure eye.
  Capturing the intruders frees point~$a$ (\cref{fig:static}c), which touches no Black stone.
  The region is then not even vital: White can re-enter at~$a$ with two liberties, and the argument above fails.
  In a pure eye every freed point still touches the chain.
\end{itemize}

\begin{figure}[t]
\centering
\begin{tabular}{@{}c@{\hspace{8mm}}c@{\hspace{8mm}}c@{}}
\begin{tikzpicture}[x=4.4mm,y=4.4mm]
\gogrid{3}{3}\bstones{2/1,2/2,2/3}\wstones{1/2,1/3,3/2}
\end{tikzpicture}
&
\begin{tikzpicture}[x=4.4mm,y=4.4mm]
\gogrid{4}{4}\bstones{1/1,2/1,2/2,3/2,3/3,4/3,4/4,4/2}\wstones{1/4,2/4,1/3}
\end{tikzpicture}
&
\begin{tikzpicture}[x=4.4mm,y=4.4mm]
\gogrid{4}{4}\bstones{1/1,2/1,2/2,3/2,3/3,4/3,4/4,4/2,3/4,2/3,1/2}
\node[fill=white,inner sep=0.4pt,font=\footnotesize\itshape] at (1,4) {a};
\end{tikzpicture}\\[1mm]
(a) & (b) & (c)
\end{tabular}
\caption{(a) Condition (C2): Black has just passed and White passes back.
(b) A region that is vital in Benson's classic sense, since its empty points all touch the Black chain, but no pure eye, since the White stones do not.
(c) The same region after Black captures the intruders: point~$a$ touches no Black stone.}
\label{fig:static}
\end{figure}

When Black's $X^*$ chains enclose every remaining region and no region has more than 14 points, the solver evaluates each region as a \emph{local game}.
White moves first and may pass.
Black plays either \emph{anchored} moves (adjacent to the walls or to earlier anchored stones), or interior moves in regions of at most eight points.
In the local game, any White capture of a Black stone counts as a White win.
Black's stones therefore only accumulate, so the local game is acyclic, independent of the history, and memoizable.
Composing the local strategies yields a whole-board win under the conditions of \cref{lem:static}.
Superko only removes White options.
Black's new stones are permanent, so they cannot repeat a board of the local game, and the possibility of a black move in the local game repeating a board in the history is ruled out by (C1).

A proof for $w$ stays valid if $w$'s own options are removed.
We therefore optionally skip the moves of the side being proven that fill its own single-point eyes or play inside its own certified territory.
In all our runs the first such move at each node is kept as a tempo move, which is still sound, because a result \emph{against} that side is reported as unknown. We effectively prune the search tree by removing moves that are extremely unlikely to help the side being proven, but sacrifice completeness of the proof.

Under PSK this pruning is sound but not complete.
On \bx{1}{12} at komi $1.5$, neither pruned search proves its side, and only the unpruned search proves Black's win.
Black's move 31 in one such proof fills its own eye, and White captures three stones next move:
\begin{center}
\begin{tikzpicture}[x=3.6mm,y=3.6mm]
\gogrid{1}{12}\bstones{2/1,3/1,6/1,8/1,10/1}\wstones{11/1}\golabel{9}{1}{31}
\end{tikzpicture}
\end{center}
The sacrifice changes which boards can recur later, and Black's win needs it.
We therefore fall back to unpruned runs when both pruned runs fail.

% ===========================================================================
\section{The Solver and Its Validation}\label{sec:solver}

Sanko is negamax \dfpn{} with the $1{+}\varepsilon$ rule ($\varepsilon=0.3$), 64-bit proof numbers, a pass shortcut, and canonical keys under the board's symmetries.

New nodes get their initial proof and disproof numbers from a logistic model of the probability $p$ that the side to move wins.
Its 14 features are a constant, the side to move, the komi, and cheap bitboard counts: stones, eye points, chains in atari, chains with two liberties and liberties of each side, and White's points that touch no Black stone.
Point counts are divided by the number of points and chain counts by four.
The numbers are $1+S(-\ln p)$ and $1+S(-\ln(1-p))$ with $S=64$, so a likely win starts with a small proof number.
The model learns from the solver itself: a solve can write a sample of the proofs it stores, each a position labelled with its winner, and Newton's method fits the weights.
The built-in weights come from solves of \bx{3}{3}, \bx{4}{4}, \bx{3}{7}, \bx{3}{8}, \bx{4}{5}, \bx{1}{10}, \bx{1}{11}, \bx{2}{10} and \bx{2}{11} near their values.
A model fitted to the other boards predicts each of them with a log loss of 0.13 on average, against 0.70 for a constant model.
The prior only orders the search.
The initial numbers lie in $[1,2^{40}]$ and are never 0 or infinite.
Nodes are decided only by proofs, so any weights give the same values.
Against a prior based on White's eye space, it cuts expansions by a factor of 2.8 and time by 1.9 on our benchmark positions.
Of the benchmark suites, only the \bx{4}{5} suite comes from a training board.
Without it, the factor over the eye-space prior is 2.6 instead of 2.8, and 2.9 instead of 3.2 on the ablation suites (\cref{tab:ablation}). We visualize the effect of the learned prior in \cref{fig:prior}.

\begin{figure}[t]
\centering
\begin{tikzpicture}
\begin{axis}[width=\columnwidth,height=0.78\columnwidth,xmode=log,ymode=log,
  xlabel={expansions, eye-space prior},ylabel={expansions, learned prior},
  xmin=1e2,xmax=2e8,ymin=1e2,ymax=2e8,legend pos=north west,legend style={font=\scriptsize,cells={anchor=west}},
  tick label style={font=\scriptsize},label style={font=\small}]
\addplot[black!40,domain=1e2:2e8,samples=2] {x};
\addlegendentry{equal}
\addplot[only marks,mark=*,mark size=1.6pt] coordinates {(6.882e+05,2.137e+05) (1.23e+06,2.664e+05) (5.14e+06,2.746e+06) (2.91e+06,7.197e+05) (3.805e+06,5.91e+05) (2.962e+06,5.955e+05) (1.136e+07,3.634e+06) (2.156e+06,4.48e+05) (2.754e+06,7.898e+05) (1.675e+06,3.531e+05) (8.089e+06,7.983e+05) (6.62e+06,2.279e+06) (8.923e+06,1.437e+06) (5.971e+06,1.139e+06) (3.662e+06,6.36e+05) (1.335e+06,3.038e+05)};
\addlegendentry{\bx{4}{5}}
\addplot[only marks,mark=o,mark size=1.6pt] coordinates {(8.739e+05,2.89e+05) (3.592e+06,1.416e+06) (1.253e+07,6.466e+06) (3.163e+06,1.08e+06) (4.871e+07,3.84e+07) (2.034e+07,8.365e+06) (2.721e+05,1.195e+05) (9.657e+06,7.147e+06) (5.941e+05,4.81e+05) (6.603e+06,3.05e+06) (3.759e+05,1.166e+05) (5.099e+05,1.815e+05) (1.825e+06,4.098e+05) (4.561e+06,2.356e+06) (9.619e+06,2.658e+06) (3.303e+05,2.26e+05) (7.898e+05,3.714e+05) (2.131e+05,7.151e+04) (2.622e+06,1.664e+06) (2.607e+06,3.177e+05) (3.984e+05,9.752e+04) (7.691e+05,1.704e+05) (2.562e+06,8.375e+05) (2.211e+06,1.575e+06)};
\addlegendentry{\bx{5}{5}}
\addplot[only marks,mark=square,mark size=1.6pt] coordinates {(5.588e+06,2.318e+06) (1.074e+05,3.752e+04) (1.217e+06,4.751e+05) (9.361e+05,2.553e+05) (1.156e+06,1.426e+05) (1.713e+05,3.128e+04) (6.574e+04,4685) (5.801e+04,3.594e+04) (1.688e+06,7.112e+05) (7675,5744) (1.279e+05,4.061e+04) (1.341e+05,3.844e+04) (1.752e+06,5.947e+05) (2.451e+05,1.036e+05) (2.589e+05,1.747e+04) (3.286e+04,6396) (5017,3415) (4.148e+04,1.31e+04) (2.244e+05,7.924e+04) (9.252e+04,1.314e+04) (9.632e+04,9.362e+04) (1.486e+06,4.459e+05) (9.19e+04,1.702e+04) (1.142e+04,8114)};
\addlegendentry{\bx{4}{6}}
\addplot[only marks,mark=triangle,mark size=1.6pt] coordinates {(1.017e+07,2.733e+06) (1.05e+07,2.095e+06) (4.711e+06,1.531e+06) (1.884e+05,2.363e+04) (1.639e+06,6.66e+05) (8.909e+04,1.622e+05) (1.016e+07,2.291e+06) (1e+08,9.901e+07) (4.686e+06,1.039e+06) (1.946e+06,5.702e+05) (6.109e+06,2.245e+06) (8.52e+05,2.427e+05)};
\addlegendentry{\bx{3}{9}}
\addplot[only marks,mark=diamond,mark size=1.6pt] coordinates {(1.175e+07,7.723e+06) (8.569e+04,5.042e+04) (4.512e+06,2.529e+06) (2.298e+04,8384) (913,193) (1.071e+05,3.008e+04) (4.946e+06,3.772e+06) (1.29e+05,2145) (1.202e+07,1.275e+07) (5.016e+04,3.321e+04) (1.622e+07,1.137e+07) (1.544e+07,1.176e+07)};
\addlegendentry{\bx{4}{7}}
\addplot[only marks,mark=x,mark size=1.6pt] coordinates {(2.043e+05,1.866e+05) (1e+08,1e+08) (1.544e+05,1.253e+05) (1.478e+05,1.466e+05) (8116,3717) (3796,2531) (8.987e+04,2.651e+04) (3.142e+05,4.015e+05) (2333,456) (5190,1565) (8.823e+07,1.387e+07) (2913,910)};
\addlegendentry{\bx{1}{12}}
\addplot[only marks,mark=+,mark size=1.6pt] coordinates {(4.762e+05,1.971e+05) (1.046e+06,3.183e+05) (2.673e+06,1.215e+06) (1.434e+05,2.337e+04) (1.434e+06,9.186e+05) (6.768e+05,2.308e+05) (2.95e+05,1.826e+05) (4.021e+06,1.7e+06) (1.294e+07,5.563e+06) (1.169e+07,4.05e+06) (5.56e+05,2.539e+05) (1.144e+06,5.764e+05)};
\addlegendentry{\bx{2}{12}}
\end{axis}
\end{tikzpicture}
\caption{Expansions per benchmark position with the learned prior against a prior based on the opponent's eye space (112 positions, serial, node limit $10^8$).
Points below the diagonal favour the learned prior, which needs 2.8$\times$ fewer expansions in the geometric mean.}
\label{fig:prior}
\end{figure}

As in~\cite{kaneko2010parallel}, all threads run \dfpn{} from the root over one shared table, which in Sanko is lock-free.
A busy counter per node, kept in a shared array indexed by the node's key, acts as a virtual loss, like Kaneko's virtual proof numbers, and spreads the threads apart.
A thread accepts a proof only through its own classification, which re-applies \cref{alg:transfer} or, for a path-specific proof, the twin identity test, or through its own recursive search.
Values read from other threads only steer the search.
Parallel soundness therefore reduces to serial soundness.

On boards of up to 30 points the position key is exact, because its Zobrist keys for the points, the side to move and the pass state are linearly independent over $\mathrm{GF}(2)$.
Each alternate slot XORs the position key with a fixed constant. Neither the constant, nor the resulting XOR, is an XOR combination of the Zobrist keys, so the base and alternate keys of different positions must be distinct.
A proof is reused outside its own superko game only through \cref{alg:transfer}, which uses exact board keys, stone sets and capture sets, so no such reuse depends on a hash being collision-free.
We mitigate the risk that path-specific keys add a non-linear path signature and can collide, and concurrent writes can mix two entries in one slot:
Proof entries match on an independent second 64-bit hash, sealed with a non-linear hash of the entry's other words. Therefore, a mixed entry fails the check and counts as a miss.
For a path-specific entry the second hash is also combined with a second, independent path signature, so the entry is reused only if both signatures agree.
\Cref{sec:risk} bounds the probability that hashing ever produces a wrong result at many magnitudes below the probability of hardware errors.

We validate the solver with several types of checks, each targeting a different failure mode:
\begin{itemize}
\item a \emph{differential selftest} solves random positions, with real superko histories and random komi, both with the full solver and with an oracle that reuses results only in the identical superko game and, on the large boards and in two of the three small-board modes, uses no static knowledge.
\item \emph{threshold probes} set the komi exactly at the point where a static rule declares a winner, then compare with the oracle.
\item \emph{large-board selftests} use long random prefixes on the precise boards and table layouts of our claims (\bx{4}{7}, \bx{5}{6}, \bx{3}{10}, \bx{2}{14}, \ldots).
\item \emph{independent re-solves} split a proof into its branches and solve each in a fresh process.
\item a \emph{brute-force check} solves all prefixes of random games on one shared table and compares them with a minimax search whose memo is keyed on the exact game state, including the set of earlier boards.
\item \emph{fault injection} makes the solver accept wrong proofs on purpose and counts how often that changes a result (\cref{sec:risk}).
\end{itemize}
Sanko passes 10{,}808 small-board selftests, 31{,}622 threshold probes, 28{,}033 large-board selftests and 7{,}538 brute-force comparisons without a single mismatch (\cref{tab:validation}).
%All six White replies on \bx{5}{5} were re-solved independently, and the \bx{4}{7} lower bound was proved again without pruning.

% ===========================================================================
\section{Experiments}\label{sec:experiments}

All runs use one machine with two AMD EPYC 9475F CPUs (192 threads, 1.5\,TB, tables of up to 0.97\,TB) and a profile-guided build of the released solver, which runs the same search as the plain build.
Times are wall-clock and include table setup.
Effort is given in expansions (\cref{sec:background}).

\begin{table}[t]
\centering\small
\setlength{\tabcolsep}{3.2pt}
\begin{tabular}{@{}llrr@{}}
\toprule
Solver & CPU, parallelism & Time & Nodes\\
\midrule
\multicolumn{4}{@{}l}{\bx{5}{5}, komi 24.5 (Black wins)}\\
EWS & i7-10510U, 6 proc. & 5.25\,h & 2.61\,B\\
Sanko & EPYC 9475F, 6 threads & \textbf{15.7\,min} & 1.38\,B\\
Sanko & EPYC 9475F, 14 threads & 7.4\,min & 1.54\,B\\
Sanko & EPYC 9475F, 192 threads & \textbf{1.3--1.6\,min} & 2.0--2.5\,B\\
\midrule
\multicolumn{4}{@{}l}{\bx{4}{4}, komi 1.5 (Black wins), one thread}\\
EWS & i7-10510U & 13.6\,s & 0.50\,M\\
Sanko & EPYC 9475F & 0.23\,s & 0.086\,M\\
\bottomrule
\end{tabular}
\caption{Comparison with EWS~\protect\cite{randall2024ews}, the previous PSK state of the art.
Nodes are expansions for Sanko and best-first tree nodes for EWS.
EWS split \bx{5}{5} into six processes, one per White reply to C3.
We solve the position after C3 in a single search.}
\label{tab:ews}
\end{table}

\subsection{Speed on \texorpdfstring{\bx{5}{5}}{5x5}}
\Cref{tab:ews} compares Sanko with EWS on its own benchmark.
With six threads, \bx{5}{5} takes 16 minutes, against 5.25 hours for EWS with six processes, a factor of 20.1.
Measured in thread-hours, it is 1.6 against 31.5.
Part of this is hardware: our six threads each have a server core, while EWS ran six processes on the four cores of a laptop i7-10510U.
The gap is far larger than the hardware difference, though: on one thread, Sanko solves \bx{4}{4} 58 times faster (0.23 against 13.6 seconds).
On one 192-thread machine the proof takes about a minute and a half (1.3, 1.5, 1.6 and 1.6 minutes in four runs), of which about 10 seconds are table setup.
MIGOS solves \bx{5}{5} in 14 minutes, but under basic ko~\cite{vanderwerf2009rect}.
Under full situational superko, MIGOS was more than 1{,}000$\times$ slower already on \bx{4}{4}~\cite{vanderwerf2003small}.

\begin{table}[t]
\centering\small
\setlength{\tabcolsep}{2pt}
\begin{tabular}{@{}lcc@{\hspace{3pt}}r@{\hspace{3pt}}r@{\,}l@{\hspace{3pt}}r@{\,}l@{\hspace{4pt}}r@{\hspace{3pt}}r@{\,}l@{\hspace{3pt}}r@{\,}l@{}}
\toprule
Board & PSK & \begin{tabular}[b]{@{}c@{}}Basic\\ko\end{tabular} & \multicolumn{5}{c}{Black proof} & \multicolumn{5}{c}{White proof}\\
\cmidrule(lr){4-8}\cmidrule(l){9-13}
 & & & komi & \multicolumn{2}{c}{exp.} & \multicolumn{2}{c}{time} & komi & \multicolumn{2}{c}{exp.} & \multicolumn{2}{c}{time}\\
\midrule
\bx{1}{10} & \textbf{1} & 1 & 0.5 & 25 & k & 10 & s & 1.5 & 0.12 & M & 10 & s\\
\bx{1}{11} & \textbf{2} & 2 & 1.5 & 35 & k & 10 & s & 2.5 & 0.34 & M & 10 & s\\
\bx{1}{12} & \textbf{2} & 2 & 1.5 & 1.5 & B & 30 & s & 2.5 & 26 & M & 10 & s\\
\bx{1}{13} & \textbf{0} & -- & $-$0.5 & 0.41 & M & 10 & s & 0.5 & 88 & M & 11 & s\\
\bx{1}{14} & \textbf{1} & -- & 0.5 & 4.1 & M & 10 & s & 1.5 & 0.57 & B & 17 & s\\
\bx{1}{15} & \textbf{2} & -- & 1.5 & 0.28 & B & 14 & s & 2.5 & 4.4 & B & 66 & s\\
\bx{1}{16} & \textbf{1} & -- & 0.5 & 0.79 & B & 22 & s & 1.5 & 475 & B & 1.7 & h\\
\bx{1}{17} & \textbf{0} & -- & $-$0.5 & 7.0 & B & 102 & s & 0.5 & 4{,}757 & B & 17.4 & h\\
\bx{1}{18} & \textbf{1} & -- & 0.5 & 108 & B & 25 & min & 1.5 & 7{,}748 & B & 28.3 & h\\
\midrule
\bx{2}{10} & \textbf{4} & 4 & 3.5 & 48 & M & 11 & s & 4.5 & 19 & M & 11 & s\\
\bx{2}{11} & \textbf{6} & 6 & 5.5 & 0.40 & B & 20 & s & 6.5 & 54 & M & 11 & s\\
\bx{2}{12} & \textbf{4} & -- & 3.5 & 1.9 & B & 64 & s & 4.5 & 1.8 & B & 63 & s\\
\bx{2}{13} & \textbf{6} & -- & 5.5 & 13 & B & 7 & min & 6.5 & 6.0 & B & 4 & min\\
\bx{2}{14} & \textbf{4} & -- & 3.5 & 126 & B & 1.4 & h & 4.5 & 146 & B & 1.5 & h\\
\bx{2}{15} & \textbf{6} & -- & 5.5 & 1{,}223 & B & 14.3 & h & 6.5 & 468 & B & 5.6 & h\\
\midrule
\bx{3}{7} & \textbf{21} & 21 & 20.5 & 0.11 & B & 13 & s & \multicolumn{5}{c}{(full board)}\\
\bx{3}{8} & \textbf{24} & 24 & 23.5 & 1.8 & B & 58 & s & \multicolumn{5}{c}{(full board)}\\
\bx{3}{9} & \textbf{5} & 5 & 4.5 & 9.7 & B & 6 & min & 5.5 & 5.2 & B & 3 & min\\
\bx{3}{10} & \textbf{8} & $\ge$6 & 7.5 & 243 & B & 2.7 & h & 8.5 & 97 & B & 1.2 & h$^\ast$\\
\midrule
\bx{4}{5} & \textbf{20} & 20 & 19.5 & 42 & M & 11 & s & \multicolumn{5}{c}{(full board)}\\
\bx{4}{6} & \textbf{8} & 8 & 7.5 & 1.6 & B & 61 & s & 8.5 & 0.52 & B & 26 & s\\
\bx{4}{7}$^\dagger$ & \textbf{12} & 28 & 11.5 & 66 & B & 44 & min & 12.5 & 16 & B & 11 & min\\
\midrule
\bx{5}{5} & 25 & 25 & 24.5 & 2.0 & B & 77 & s & \multicolumn{5}{c}{(full board)}\\
\bx{5}{6} & \textbf{4} & 4 & 3.5 & 316 & B & 3.9 & h & 4.5 & 1{,}235 & B & 14.8 & h$^\ast$\\
\bottomrule
\end{tabular}
\caption{PSK values established in this work, with the basic-ko values of MIGOS~\protect\cite{vanderwerf2009rect}.
Each proof is given by its komi, expansions and time on one 192-thread machine (\bx{1}{n}: unpruned, $^\dagger$White proof unpruned).
The Black proofs of \bx{5}{5} and \bx{5}{6} start after Black's C3, that of \bx{3}{10} after Black's D2.
$^\ast$One part per symmetry-unique first move of Black, each in a fresh process (times added, \cref{tab:5x6,tab:3x10} in \cref{app:results}).
A full-board value needs no White proof: Black owns every point, and no result can exceed the board size.
Bold values are new under PSK. A dash marks boards that were unsolved under any rules.
Rows with $\ge$ or $\le$ give the bound proved so far.
Smaller boards (\bx{1}{4}--\bx{1}{9}, \bx{3}{3}, \bx{4}{4}) are consistent with the literature~\protect\cite{weninger2017plgo,randall2024ews,vanderwerf2003small}.}
\label{tab:values}
\end{table}

\begin{figure}[t]
\centering
\begin{tikzpicture}[x=5mm,y=5mm]
\gogrid{3}{10}
\bnum{4}{2}{1}\wnum{6}{2}{2}\bnum{8}{2}{3}\wnum{2}{3}{4}\bnum{5}{2}{5}\wnum{7}{1}{6}
\bnum{10}{2}{7}\wnum{9}{2}{8}\bnum{9}{1}{9}\wnum{9}{3}{10}\bnum{8}{3}{11}\wnum{3}{2}{12}
\bnum{6}{3}{13}\wnum{3}{1}{14}\bnum{8}{1}{15}\wnum{3}{3}{16}\bnum{7}{2}{17}\wnum{4}{3}{18}
\bnum{5}{3}{19}\wnum{1}{3}{20}\bnum{5}{1}{21}\wnum{2}{1}{22}
\gocoords{3}{A,B,C,D,E,F,G,H,J,K}
\node[font=\small] at (-0.6,2) {(a)};
\end{tikzpicture}\\[2mm]
\begin{tikzpicture}[x=5mm,y=5mm]
\gogrid{3}{10}
\bstones{5/3,6/3,8/3,10/3,4/2,5/2,7/2,8/2,10/2,5/1,6/1,8/1,9/1}
\wstones{1/3,2/3,3/3,4/3,3/2,9/2,2/1,3/1}
\gocoords{3}{A,B,C,D,E,F,G,H,J,K}
\node[font=\small] at (-0.6,2) {(b)};
\end{tikzpicture}
\caption{\bx{3}{10} has value 8.
(a)~Black's D2 (1) and the next 21 moves of the principal variation of Black's proof at komi 7.5.
It continues 23~K3 (capturing 8 and 10), 24~J2, 25~F1 (capturing 2 and 6).
(b)~The position after move 25, where the static analysis proves Black's win.
Whenever it is White's turn, we follow the reply whose refutation took the most work, so it shows how Black wins rather than the optimal margin.
White wins at komi 8.5 against every first move of Black.}
\label{fig:3x10}
\end{figure}

\subsection{New Values}
\Cref{tab:values} lists the boards we solved.
Every value except that of \bx{5}{5} is new under PSK.
On the boards that MIGOS also solved, the PSK value equals the basic-ko value, except on \bx{2}{2}, whose superko value MIGOS already reported~\cite{vanderwerf2003small}, and on \bx{4}{7} (\cref{sec:4x7}).
Each of our proofs takes seconds to hours on a single machine, where MIGOS needed hours to days of single-CPU time: 1.86 days for \bx{3}{8}, 4.3 days for \bx{3}{9}, and 10.6 days for \bx{4}{7}~\cite{vanderwerf2009rect}.
\bx{2}{12}--\bx{2}{15} and \bx{1}{13}--\bx{1}{18} had no published value under any rules, since MIGOS stopped at \bx{2}{11} and \bx{1}{12}~\cite{vanderwerf2009rect}.
The linear values from \bx{1}{9} to \bx{1}{15} repeat 0, 1, 2, 2 with period four, but \bx{1}{16} is worth 1, not 2 (\cref{tab:linear}).
\bx{5}{6}, the largest board MIGOS solved, has the PSK value 4, as under basic ko.
\bx{3}{10}, for which MIGOS proved only the lower bound 6 and two months of search found no upper bound~\cite{vanderwerf2009rect}, has the PSK value 8, reached by Black's first move D2 (\cref{fig:3x10}).
White wins at komi 8.5 against every first move of Black (\cref{tab:3x10}).

\Cref{fig:scaling} plots proof sizes against board area.
On boards with at least two rows, effort grows by roughly a factor of ten for every two to three additional points.
Linear boards are far more expensive per point: \bx{1}{12} needs almost as many expansions as \bx{2}{12}.
Their play is dominated by capture--recapture sequences, so almost every proof is path-specific.
On the unpruned \bx{1}{12} proof the revisit factor is about 9{,}400: 1.53 billion expansions of only 163{,}000 distinct nodes.

\begin{figure}[t]
\centering
\begin{tikzpicture}
\begin{axis}[width=\columnwidth,height=0.78\columnwidth,ymode=log,
  xlabel={board points},ylabel={expansions per proof},
  xmin=11,xmax=31,ymin=1e6,ymax=1e13,legend columns=2,legend style={at={(0.6,0.98)},anchor=north,font=\scriptsize,cells={anchor=west}},
  tick label style={font=\scriptsize},label style={font=\small}]
\addplot[only marks,mark=*,mark size=1.6pt] coordinates
  {(20,1.85e+07) (20,4.81e+07) (22,5.41e+07) (22,3.96e+08) (24,1.81e+09) (24,1.86e+09) (26,6.02e+09) (26,1.31e+10) (28,1.26e+11) (28,1.46e+11) (30,4.68e+11) (30,1.22e+12)};
\addlegendentry{\bx{2}{n}}
\addplot[only marks,mark=square,mark size=1.6pt] coordinates
  {(21,1.13e+08) (24,1.8e+09) (27,5.19e+09) (27,9.67e+09) (30,3.15e+09) (30,4.28e+09) (30,5.31e+09) (30,9.41e+09) (30,9.97e+09) (30,1.04e+10) (30,1.06e+10) (30,1.09e+10) (30,1.53e+10) (30,1.76e+10) (30,2.43e+11)};
\addlegendentry{\bx{3}{n}${}^\dagger$}
\addplot[only marks,mark=triangle,mark size=1.6pt] coordinates
  {(20,4.18e+07) (24,5.22e+08) (24,1.64e+09) (28,1.64e+10) (28,6.56e+10)};
\addlegendentry{\bx{4}{n}}
\addplot[only marks,mark=diamond,mark size=1.6pt] coordinates
  {(25,1.96e+09) (30,6.01e+10) (30,6.96e+10) (30,8.74e+10) (30,1e+11) (30,1.46e+11) (30,1.66e+11) (30,1.68e+11) (30,2e+11) (30,2.38e+11) (30,3.16e+11)};
\addlegendentry{\bx{5}{n}${}^\dagger$}
\addplot[only marks,mark=o,mark size=1.6pt] coordinates
  {(12,2.6e+07) (12,1.53e+09) (13,8.79e+07) (14,4.06e+06) (14,5.66e+08) (15,2.82e+08) (15,4.36e+09) (16,7.92e+08) (16,4.75e+11) (17,7.01e+09) (17,4.76e+12) (18,1.08e+11) (18,7.75e+12)};
\addlegendentry{\bx{1}{n}}
\end{axis}
\end{tikzpicture}
\caption{Proof sizes in expansions versus board area.
${}^\dagger$At 30 points, the \bx{3}{n} and \bx{5}{n} marks include one White-pruned subproblem per symmetry-unique first stone move of Black. The \bx{1}{n} proofs are unpruned.}
\label{fig:scaling}
\end{figure}

\begin{table}[t]
\centering\small
\setlength{\tabcolsep}{3pt}
\begin{tabular}{@{}lr@{}}
\toprule
Variant (problem) & Effect\\
\midrule
No proof transfer (\bx{3}{7}) & 5{,}952$\times$\\
No region solver & 2.33$\times$ (1.02--4.61)\\
No interior region moves & 1.70$\times$ (1.00--2.60)\\
Eye-space prior instead of the learned prior & 3.19$\times$ (2.15--4.43)\\
No prior (uniform initial numbers) & 4.72$\times$ (3.40--6.02)\\
1 thread vs.\ 192 threads (\bx{3}{9}) & 42$\times$ time\\
No virtual loss, 192 threads (\bx{3}{9}) & 55$\times$ time\\
\bottomrule
\end{tabular}
\caption{Ablations.
Each component is removed from the full solver.
Effects are expansion ratios, except for the thread rows, which compare time.
The four knowledge rows give the geometric mean over the 64 positions of five suites (\bx{4}{5}, two \bx{5}{5} suites after C3, and \bx{4}{6} for each side), with the range of the suite means in parentheses.
Suites are seeded sets of random positions (12--16 positions, serial, deterministic, node limit $10^8$).
The transfer row compares two 192-thread solves of \bx{3}{7}.}
\label{tab:ablation}
\end{table}

\subsection{Ablations}\label{sec:ablation}
\Cref{tab:ablation} measures each component and shows that proof transfer dominates.
Without it, results are reused only in the identical superko game, the only safe option under PSK without \cref{thm:transfer}.
\bx{3}{7} then needs 664 billion expansions instead of 112 million, and the value is unchanged.
On these five suites the learned prior saves a factor of 4.7 over uniform initial numbers and 3.2 over the eye-space prior.
The region solver contributes a factor of 2.3, most of it on the full-board problems \bx{4}{5} and \bx{5}{5}, where Black must kill every White group.
Parallel search gains a factor of 42 on 192 threads.
The serial \bx{3}{9} upper-bound proof has 2.1\,B expansions and takes 2.3\,h, against 5.2\,B expansions and 200\,s on 192 threads, so the threads search 2.4 times as many nodes.
Without virtual loss the threads crowd into the same subtrees. Then, the 192-thread run takes 3.0\,h, longer than the serial one, and expands 42 times as many nodes.

% ===========================================================================
\section{The Value of \texorpdfstring{\bx{4}{7}}{4x7} under Superko}\label{sec:4x7}

\begin{figure*}[t]
\centering
\begin{tabular}{@{}c@{\hspace{7mm}}c@{\hspace{7mm}}c@{}}
\begin{tikzpicture}[x=4.6mm,y=4.6mm]
  \gogrid{4}{7}
  \bstones{3/4,4/4,5/4,6/4,7/4,1/3,2/3,3/3,4/3,6/3,4/2}
  \wstones{5/3,7/3,2/2,3/2,5/2,6/2,4/1,5/1,7/1}
  \node[fill=white,inner sep=0.4pt,font=\footnotesize\itshape] at (1,2) {a};
  \golabel{7}{2}{b}
  \gocoordsfour
\end{tikzpicture}
&
\begin{tikzpicture}[x=4.6mm,y=4.6mm]
  \gogrid{4}{7}
  \bstones{3/4,4/4,5/4,6/4,7/4,1/3,2/3,3/3,4/3,6/3,1/2,4/2}
  \wstones{5/3,7/3,2/2,3/2,5/2,6/2,7/2,1/1,3/1,4/1,5/1,7/1}
  \golabel{2}{1}{k}
  \gocoordsfour
\end{tikzpicture}
&
\begin{tikzpicture}[x=4.6mm,y=4.6mm]
  \gogrid{4}{7}
  \bstones{1/4,3/4,4/4,5/4,6/4,7/4,1/3,2/3,3/3,4/3,6/3,1/2,4/2,1/1}
  \wstones{5/3,7/3,2/2,3/2,5/2,6/2,7/2,3/1,4/1,5/1,7/1}
  \gocoordsfour
\end{tikzpicture}\\[1pt]
\small (a) after move 25, White to move & \small (b) moonshine ko, Black to move & \small (c) seki: Black wins by 3
\end{tabular}
\caption{\bx{4}{7}: MIGOS's published optimal line~\protect\cite{vanderwerf2009rect}, replayed under PSK.
(a)~MIGOS plays A2 (\emph{a}), after which Black still wins all 28 points.
White's G2 (\textbf{b}) holds Black to $+3$.
(b)~After G2, the attack B1, C1 (White), A2 ends with White capturing at A1.
Black's recapture at \textbf{k} would recreate an earlier board, so PSK forbids it.
Every board-changing Black move ataris Black's own group, so both players pass and White lives.
Under MIGOS's rules the pass lifts the ko ban and the game needs three passes to end, so Black retakes and kills.
(c)~Black's best under PSK is a seki: each group has one eye (B4, F1) and they share the liberty B1.}
\label{fig:4x7}
\end{figure*}

Under MIGOS's rules \bx{4}{7} is a full-board win. Black captures everything, giving 28--0.
Under PSK its value is 12. A Black proof at komi 11.5 (first move D3) and a White proof at komi 12.5.
To show why, we replay the optimal line that MIGOS published under PSK (Figures 4a and 4b of~\cite{vanderwerf2009rect}, 31 moves).
Every move is legal, and the final position is still a Black full-board win.
We then solve the line backwards.
The full board is still won after move~26, but not after move~25.
After Black's 25th move (\cref{fig:4x7}a), White refutes the line with G2 (or C1) instead of MIGOS's A2.
Every other White move still loses all 28 points.

After G2 the exact PSK value is $+3$, again established by two proofs.
Black's natural attack (\cref{fig:4x7}b) runs into a hanging ko.
White's capture at A1 cannot be answered, because under PSK the recapture at B1 recreates an earlier board, and every Black move that would change the board puts Black's own group into atari (the state of having only one liberty remaining).
Black has to pass, White passes, and the game ends with White alive.
This is the ``moonshine life'' that MIGOS's authors mention for \bx{3}{7} under superko~\cite{vanderwerf2009rect}.
Under MIGOS's rules the pass lifts the ko ban, and the game cannot end by two passes after a fresh ko capture~\cite{vanderwerf2004thesis}.
Black then retakes, captures White's big group, and wins everything, as a proof of that continuation confirms.
Under PSK, Black's best is instead the seki of \cref{fig:4x7}c.

The one-liberty convention of MIGOS~\cite{vanderwerf2009rect} does not explain the gap on its own, as with it added to our PSK scoring, White still wins \bx{4}{7} at komi 27.5.
Each step of the analysis is a small proof of at most a quarter of a million expansions, and the commands in the artifact's README reproduce the whole chain in under twenty seconds (\cref{app:4x7}).

% ===========================================================================
\section{Discussion}\label{sec:discussion}

The factors in \cref{tab:ablation} multiply.
Proof transfer is by far the largest of them. It restores most of the reuse that superko takes away.
The learned prior and the region solver cut the remaining tree by factors of 4.7 and 2.3.
The region solver helps most in kill-all positions, where proofs would otherwise play out every capture.
Shared-table parallelism adds a factor of 42 on 192 threads, because proofs found by one thread are immediately visible to all others, at the cost of 2.4 times as many expansions.

On every board where both rule sets have a value, the values agree except on \bx{2}{2}, whose superko value MIGOS already reported, and on \bx{4}{7}.
The disagreement does not come from an error in either search.
It comes from rules that matter only at the very end of a game, namely what a pass does to a ko ban, and how many passes end the game.
Values computed under conventions that make repetition harmless should therefore not be quoted as values of superko Go without checking.
Superko also produces strategies with no basic-ko counterpart, such as the eye-filling sacrifice on \bx{1}{12}.

A claimed value rests on the following:
\begin{itemize}
\item the soundness arguments for transfer, static leaves and pruning (\cref{thm:transfer,lem:static,app:proofs}),
\item differential testing against an oracle with no static knowledge, on the claimed boards themselves,
\item independent re-solves.
\end{itemize}
Each claim uses only proofs that are trusted for the reported winner.
Because pruning is applied only to the side being proven, a failed proof is reported as unknown, never as a win for the opponent.

Larger boards split naturally: the \bx{5}{6} upper bound decomposes into ten independent subproblems, one per first move of Black, which can run on separate machines.
Extrapolating \cref{fig:scaling}, \bx{6}{6} would need about $10^{14}$ expansions or more, about 50 days on one of our machines, and boards above 30 points need position keys wider than 64 bits to stay exact.
Stronger knowledge would cut this further.
Our learned prior is a first step, and a stronger model, as in learned solvers~\cite{wu2023online}, may cut much more.

\subsection{Probability of an Unsound Result}\label{sec:risk}
A reported value can be wrong through an accidental hash match inside the solver, a hardware fault, or a bug.
We bound the first and compare it with measured hardware fault rates (\cref{tab:risk}).

Positions and the superko history are exact on our boards, and the transfer test uses only exact data (\cref{sec:solver}).
Hashing decides only two things: whether a path-specific proof is reused, by two independent 64-bit signatures of the set of earlier boards, and whether a slot that a concurrent write may have mixed is accepted, by the sealed second hash.
A different board set passes both signatures with probability $2^{-128}$ per comparison, and a mixed slot passes the seal with probability $2^{-64}$.
The largest run made $7.0\cdot10^{13}$ table probes and the whole campaign $1.9\cdot10^{14}$ in 187 runs, and every comparison is part of a probe.
A mixed slot needs a probe to meet a slot during the nanoseconds of its write, which we bound generously by $10^{5}$ events per run.
With a single 64-bit signature, as is common, the bound for the campaign would be $1.1\cdot10^{-5}$.

A false match changes a value only if the reused proof is wrong on the current path and that node decides the root.
We measured this by fault injection, in builds made only for the experiment.
The benchmark positions were solved repeatedly while, at some events and with a small probability, the solver accepted a proof of the same position from another path that the transfer test had rejected, a path-specific proof from a different path, or a stored proof with its winner reversed.
These are the effects of a signature collision, a collision of twin keys and a corrupted entry.
In three rounds we injected 90{,}883 faults into about 23{,}000 solves.
None of the 62{,}260 faults of the two collision kinds changed a result, which bounds the probability that such a fault changes a value by $4.9\cdot10^{-5}$ at 95\% confidence.
One of the 28{,}623 reversed winners changed a result, which gives a bound of $1.7\cdot10^{-4}$ for a corrupted entry.
With these factors, the probability that hashing made any reported value wrong is at most $2\cdot10^{-16}$ for the whole campaign, almost all of it from mixed slots.
With a single 64-bit signature it would be $6\cdot10^{-10}$.

In Google's fleet, 1.3\% of machines saw an uncorrectable memory error per year~\cite{schroeder2009dram}, which stops the machine instead of corrupting a result.
Silent errors are rarer but documented: a few defective cores per several thousand machines compute wrong results without any error report~\cite{hochschild2021cores,dixit2021sdc}, and an IBM design target of 114 FIT for silent data corruption in POWER4 systems~\cite{dixit2021sdc}, gives about $2\cdot10^{-6}$ per 20-hour run.
Dixit et al. observe that real silent CPU errors are far more frequent than such soft-error targets.
Per run, the hashing risk is thus at least eight orders of magnitude below this target before the measured factor, and about twelve after it.
Against hardware faults and bugs the remaining defence is repetition.
Every value rests on two independent proofs, one for each bound, and the lower bounds of \bx{5}{5} and \bx{4}{7} were proved again in independent runs. Furthermore, as we make the source and the exact commands available, readers may repeat any proof, in stark contrast to the existing literature.

\begin{table}[t]
\centering\small
\setlength{\tabcolsep}{3pt}
\begin{tabular}{@{}lrrr@{}}
\toprule
Source & Per event & Largest run & Campaign\\
\midrule
Twin signatures & $3\cdot10^{-39}$ & $3\cdot10^{-25}$ & $6\cdot10^{-25}$\\
Mixed slot & $6\cdot10^{-20}$ & ${<}10^{-14}$ & $1.1\cdot10^{-12}$\\
One 64-bit signature$^*$ & $6\cdot10^{-20}$ & $4\cdot10^{-6}$ & $1.1\cdot10^{-5}$\\
\midrule
Silent CPU errors$^\dagger$ & 114 FIT & $2\cdot10^{-6}$ & \\
Defective cores$^\ddagger$ & & ${\sim}10^{-3}$ & \\
\bottomrule
\end{tabular}
\caption{Upper bounds on an accidental hash match, before the factor measured by fault injection, and hardware fault rates.
$^*$For comparison only.
$^\dagger$An IBM design target for POWER4 systems, cited by~\protect\cite{dixit2021sdc}, for a 20-hour run.
$^\ddagger$A few per several thousand machines~\protect\cite{hochschild2021cores}, if the run's node has one.}
\label{tab:risk}
\end{table}

% ===========================================================================
\section{Related Work}\label{sec:related}

Thorp and Walden studied boards up to \bx{3}{4}~\cite{thorp1972}, and Sei and Kawashima solved \bx{4}{4}~\cite{sei2000four}.
MIGOS solved \bx{5}{5}~\cite{vanderwerf2003small} and most rectangles up to 30 points~\cite{vanderwerf2009rect} with iterative-deepening $\alpha\beta$ under basic ko.
Under PSK, Weninger and Hayward solved linear boards up to \bx{1}{9}~\cite{weninger2017plgo}, and EWS solved \bx{5}{5} with a best-first search that combines win-rate and proof-size estimates~\cite{randall2024ews}.
Killall-Go work solves \bx{7}{7} openings with neural heuristics and relevance zones~\cite{shih2022rz,wu2023online}.
That is a different problem, in which Black must capture everything.

Campbell described the GHI problem~\cite{campbell1985ghi}.
Breuker et al.\ solved it for best-first search~\cite{breuker2001ghi}.
Kishimoto and M\"uller gave a general solution and one for \dfpn{}, based on path simulation~\cite{kishimoto2004ghi,kishimoto2005dfpn}.
Our transfer rule is specific to PSK.
It exploits the winner/loser asymmetry of superko to replace simulation by an $\mathcal{O}(\text{path-length})$ attribute test.

Benson's algorithm~\cite{benson1976life} and its extensions~\cite{muller1997safe} recognize unconditionally safe groups and territory.
Our pure-eye analysis adapts them to PSK play-out, where territory must be won by legal captures (\cref{lem:static}).

The solved values of empty Go boards in the literature cannot be reproduced from public material (\cref{app:repro}).
To our knowledge, no code, binaries or proof data have been released for MIGOS~\cite{vanderwerf2003small,vanderwerf2009rect}, whose papers show selected variations only.
The public implementation of EWS is a minimal example that solves boards up to \bx{3}{3}, while the \bx{5}{5} solver and its baseline have not been released.
The Positional Linear Go solver~\cite{weninger2017plgo} is described in the paper only.
The Killall-Go solvers~\cite{wu2023online} are public but address a different problem.
Readers have therefore had to take these values on trust, and checking a value under another rule set, as we do for \bx{4}{7}, requires solving the board again from scratch.
We release our solver source together with the exact command line and outcome of every solver run behind our values, speed measurements and ablations.

% ===========================================================================
\section{Conclusion}

Positional superko invalidates the standard tools of game solving and, on \bx{4}{7}, changes the answer itself.
We showed how to restore those tools soundly: a new proof-transfer theorem that looks only at the winner's boards, novel static rules that hold under play-out, and pruning that is applied only to the side being proven.
The resulting solver, Sanko, is faster, often by orders of magnitude, than the previous state of the art at equal thread count, and it solves boards that were out of reach under PSK.

\emph{Limitations.}
Our proofs are not externally checkable certificates.
Under PSK a certificate must encode path-dependent proofs, which is an open problem.
Proof size grows by roughly ten times per two to three added points, so \bx{6}{6} remains beyond reach without a qualitative advance.

\section*{Acknowledgement}
We thank ZIB's IT team for providing and maintaining the compute resources.

\bibliographystyle{named}
\bibliography{refs}

\clearpage
\appendix
\crefalias{section}{appendix}
\crefalias{subsection}{subappendix}

% ===========================================================================
\section{Proofs}\label{app:proofs}

\subsection{Setting}
A \emph{position} is a board, the side to move, and the number of consecutive passes (0 or 1).
A \emph{history} is the sequence of positions from the empty board to the current one.
We also write a history $H$ for its set of boards.
A stone move $m$ at a position with history $G$ is \emph{legal} if and only if (1)~its point is empty, (2)~it is not suicide, and (3)~its result board $r(m)$ is not in $G$.
Conditions (1) and (2) depend on the current board only.
The current board itself lies in $G$, but a stone move never recreates the current board, because it adds a stone on an empty point and that stone cannot be removed by the move itself.
So condition (3) is the only history-dependent one.
Passes are always legal and create no new board.
The game ends after two consecutive passes and is scored by area on the final board.

A proof that $w$ wins from node $N$ under history $H$ is a finite tree $T$.
At $w$-nodes it contains one legal move, at opponent nodes it contains every legal move, and its leaves are terminal positions won by $w$ or static leaves certified by \cref{lem:static}.
For a node $v$ of $T$, let $\ell(v)$ be the set of boards on $T$'s line from $N$ to $v$, with $N$ excluded, and write $H_v = H\cup\ell(v)$ for the history at $v$.

\subsection{Proof of \texorpdfstring{\cref{thm:transfer}}{Theorem \ref{thm:transfer}}}
Write $H'_v = H'\cup\ell(v)$.
We show for every node $v$ of $T$, by induction on the height of the subtree $T_v$, that $T_v$ is a valid proof for $w$ from $v$ under $H'_v$.
The induction uses three invariants.
\begin{enumerate}[label=(\alph*)]
\item $\Bw(T_v)\cap H'_v=\emptyset$.
  Every board $r\in\Bw(T_v)$ is created by a legal $w$ move at some node $u$ of $T_v$.
  Legality under $H_u\supseteq H_v$ gives $r\notin H$ and $r\notin \ell(v)$.
  Since $r\in\Bw(T)$, condition (i) gives $r\notin H'$.
  Hence $r\notin H'_v$.
\item Every board of $H_v$ that excludes an opponent move inside $T_v$ lies in $H'_v$.
  Such a board lies either in $\ell(v)$, which is part of $H'_v$, or in $H$.
  In the second case it belongs to $X(T)$ and hence to $H'$ by condition (ii).
\item For every static leaf $\lambda$ of $T_v$, $\capw_w(H'_\lambda)\cap\terr(\lambda)=\emptyset$.
  Captures along $\ell(\lambda)$ are determined by the positions and are the same under both histories, so $\capw_w(H'_\lambda)=\capw_w(H')\cup\capw_w(\ell(\lambda))$.
  The first part is disjoint from $\terr(\lambda)\subseteq\terr(T)$ by condition (iii).
  The second part is disjoint from $\terr(\lambda)$ because $\lambda$ was accepted as a static leaf under $H_\lambda$, which required $\capw_w(H)\cup\capw_w(\ell(\lambda))$ to avoid $\terr(\lambda)$.
\end{enumerate}

\emph{Terminal leaf.}
The area score depends only on the board, so $v$ is won for $w$ under any history.

\emph{Static leaf.}
\Cref{lem:static} needs (C1), which is invariant (c), and (C2), which depends only on the position.
So the certified area bound holds under $H'_v$.

\emph{$w$-node with chosen move $m$ and child $c$.}
A pass is always legal.
For a stone move, $r(m)\in\Bw(T_v)$ is not in $H'_v$ by (a), and conditions (1) and (2) hold as under $H_v$.
So $m$ is legal under $H'_v$.
The invariants restrict to $T_c$, and the induction hypothesis applies to $c$.

\emph{Opponent node.}
Let $m$ be an opponent move that is legal under $H'_v$.
A pass is contained in $T$.
Suppose $m$ is a stone move that is not contained in $T$, so it was illegal under $H_v$.
Conditions (1) and (2) are history-free, so $r(m)\in H_v$.
By (b), $r(m)\in H'_v$, which contradicts the legality of $m$ under $H'_v$.
Hence every opponent move that is legal under $H'_v$ has its subtree in $T$, and the induction hypothesis applies to each child.
Opponent moves that are legal under $H_v$ but illegal under $H'_v$ only leave surplus children in $T$, which a proof may contain.
\hfill$\square$

\paragraph{Attributes and composition.}
The solver stores, for each proof, a Bloom filter of the canonical hashes of $\Bw(T)$, the set of stones common to all boards of $\Bw(T)$ together with their minimum stone count, and the territory bitboard.
While composing, the search also tracks at most one exclusion target, or a marker for several, and a proof with a target is stored path-specifically.

\begin{lemma}[Composition]\label{lem:compose}
Let a won node take the attributes of its winning child, extended by its own board if the winning move is a stone move, and let a lost node take the join of the attributes of all its children (union of Bloom filters, intersection of common stones, minimum of the stone counts, union of targets and of territories), extended by its own exclusions.
Then the stored Bloom filter and territory contain the true sets, the stored common stones are contained in every board of $\Bw(T)$, the stored count is at most the true minimum, and the tracked targets contain $X(T)$.
\end{lemma}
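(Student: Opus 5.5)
The argument is a structural induction on the proof tree $T$, bottom-up, exactly mirroring how the solver composes attributes. For a node $v$ of $T$ with subtree $T_v$, I would prove four claims simultaneously. The stored Bloom filter $\beta(v)$ has every canonical hash of $\Bw(T_v)$ as a member. The stored common stones $C(v)$ lie in every board of $\Bw(T_v)$. The stored count satisfies $m(v)\le\min_{b\in\Bw(T_v)}|b|$, with the convention $m=\infty$ and $C=$ all points when $\Bw(T_v)=\emptyset$, so that the join identities are neutral. The stored territory $\tau(v)$ contains $\terr(T_v)$, and the tracked targets contain $X(T_v)$. The key identity I would establish first is the recursive decomposition of the true sets:
\[
\Bw(T_v)=\bigcup_{c}\Bw(T_c)\;\cup\;\{r(m)\},
\]
where the union ranges over the children $c$ of $v$ in $T$, and the term $\{r(m)\}$ is added only at a $w$-node whose chosen move $m$ is a stone move. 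The analogous identities hold for $\terr$, which is a plain union over leaves, and for $X$, where the extra term is the set of boards of $H$ that exclude an opponent move at $v$ itself.

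The base cases are short. A terminal leaf has empty $\Bw$, $X$ and $\terr$, and the stored attributes are the neutral ones. A static leaf has $\Bw=X=\emptyset$, and its $\terr$ is exactly the certified $T^*$, which is what is stored.

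For a won node, meaning a $w$-node with a single child $c$ in $T$, the decomposition reduces to $\Bw(T_c)\cup\{r(m)\}$. Inserting the hash of $r(m)$ into $\beta(c)$ keeps Bloom membership. Intersecting $C(c)$ with $r(m)$ keeps containment in every board. Taking $\min(m(c),|r(m)|)$ keeps the lower bound. Territory and targets are inherited unchanged.

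For a lost node, meaning an opponent node whose children are all of $T$'s children, each operation in the join is monotone in the right direction. Bloom union is a superset of the union of members. The intersection of the $C(c)$ is contained in each $C(c)$, hence in every board of every $\Bw(T_c)$. The minimum of minima is a lower bound. Territory and target unions contain the respective unions. Adding the node's own exclusions covers the extra term of $X$.

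I expect the main obstacle to be the target bookkeeping, because the solver tracks at most one target or a "several" marker. I would treat the marker as the top element, standing for "all boards", so containment holds trivially and the only consumer, the path-specific flag, is still set correctly. A second point needs care: canonical hashing under symmetry means the Bloom filter contains the hash of a canonical representative. I would note that membership is tested with the same canonical hash, so containment is up to symmetry. Correspondingly, the common-stone and count tests of \cref{alg:transfer} must be read on boards in the same orientation as $T$'s line; merging is restricted to path-fixing symmetries, so this is consistent. With these remarks the conclusions of the lemma are exactly the four claims at the root $N$.
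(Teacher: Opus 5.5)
Your argument is correct and is the same bottom-up induction the paper gives. You decompose $\Bw$, $\terr$ and $X$ recursively: the winning child plus the created board at a won node, and the union over all children plus the node's own exclusions at a lost node. You then note that every join over-approximates unions and under-approximates intersections. Your neutral-element conventions and base cases only make explicit what the paper leaves implicit.

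Your closing symmetry aside is not needed for the lemma, and its last clause gives the wrong justification. Joining only a merged representative's common stones and territory would not cover the image subtree $s(T_m)$. That is why the paper instead stores every loss that relied on a sibling merge path-specifically.
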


\begin{proof}
By induction over the proof.
$\Bw$ of a won node is $\Bw$ of its winning child plus the board its own stone move creates, and $\Bw$ of a lost node is the union over its children.
$X$ and $\terr$ are unions over children plus the node's own contributions.
The join operations over-approximate unions and under-approximate intersections, which preserves each inclusion.
\end{proof}

\begin{corollary}\label{cor:impl}
\Cref{alg:transfer} accepts a transfer only if conditions (i) to (iii) of \cref{thm:transfer} hold.
\end{corollary}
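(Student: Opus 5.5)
The plan is to check the three rejection branches of \cref{alg:transfer} one at a time. For each branch, if the entry passes it, the matching condition of \cref{thm:transfer} holds for the true sets $\Bw(T)$, $X(T)$ and $\terr(T)$. \Cref{lem:compose} does the work of relating the stored attributes to these true sets. Acceptance requires passing all three tests, so the conjunction (i)--(iii) follows.

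For condition (ii), I argue by contraposition through the path-specific flag. By \cref{lem:compose}, the tracked targets contain $X(T)$. The solver stores a proof path-specifically whenever it tracks at least one target (or the several-targets marker). Hence a non-path-specific entry has no tracked targets, so $X(T)=\emptyset\subseteq H'$ holds trivially. Path-specific entries are rejected in the first line, so (ii) holds for every accepted entry.

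For condition (iii), the stored territory $\tau$ contains $\terr(T)$ by \cref{lem:compose}. The capture set $\capw_w(H')$ is maintained exactly along the current path. Passing the test $\capw_w(H')\cap\tau=\emptyset$ therefore implies $\capw_w(H')\cap\terr(T)=\emptyset$.

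For condition (i), I need $a\notin\Bw(T)$ for every board $a\in H'$. I split $H'$ into strict ancestors and the current board of $N$. For a strict ancestor $a$, suppose $a\in\Bw(T)$. Then three things hold:
\begin{itemize}
\item $\mathrm{hash}(a)\in\beta$, since the Bloom filter over-approximates the set of hashes.
\item $C\subseteq a$, since the stored common stones lie in every board of $\Bw(T)$.
\item $|a|\ge m$, since the stored count is at most the true minimum.
\end{itemize}
The loop would then reject the entry, so every accepted ancestor lies outside $\Bw(T)$.

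The main obstacle is the current board $b_N$ itself, which lies in $H'$ but is never tested. Here I would use the fact that $T$ is a proof from $N$ under a history $H$ that contains $b_N$. Every winner stone move in $T$ was legal, so by legality condition (3) of the appendix it created a board outside $H$, and in particular not $b_N$. This gives $b_N\notin\Bw(T)$ with no test needed.

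The second subtlety is canonical hashing. The Bloom filter holds canonical hashes, and one might worry that a symmetric image of $a$ matches. But a match only causes extra rejections, never false acceptances. For a genuine $a\in\Bw(T)$, the hash $\mathrm{hash}(a)$ is exactly the canonical hash that was inserted, so the implication used above still holds. Since the position keys on these boards are exact, I do not need to worry about hash collisions causing a false acceptance here either.
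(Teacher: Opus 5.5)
Your proposal is correct and follows the paper's proof. Each rejection branch of \cref{alg:transfer} is matched to one condition of \cref{thm:transfer} through the approximation guarantees of \cref{lem:compose}, and (ii) holds trivially because entries with exclusion targets are path-specific and rejected first. Your explicit argument that the current board of $N$ cannot lie in $\Bw(T)$ (by legality under $H \ni b_N$) fills a step the paper leaves implicit, and the paper's extra remarks on twin-key reuse of path-specific proofs are not needed for the statement as posed.
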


\begin{proof}
A strict ancestor board $a$ can lie in $\Bw(T)$ only if it contains the common stones, has at least the minimum number of stones, and hits the Bloom filter, which has no false negatives.
If any test fails, $a\notin\Bw(T)$.
Condition (iii) is tested exactly.
Every proof with an exclusion target is path-specific and rejected by the first test, so a transferred proof has $X(T)=\emptyset$ and satisfies (ii) trivially.
Path-specific proofs are reused only on the identical board with an identical set of earlier boards, that is, in the identical superko game.
The solver recognizes the identical set by two independent signatures (Hash identity, below).
In the identical game no test is needed, condition (iii) included.
Stones leave the board only by capture, so a winner stone was captured on a territory point exactly when some earlier board has a winner stone there, which depends only on the set of earlier boards.
\end{proof}

\paragraph{Unsoundness of XOR signatures.}
A path-specific key needs an identifier of the \emph{set} of earlier boards.
XOR-ing the Zobrist keys of the boards is linear over $\mathrm{GF}(2)$, so different sets of boards whose keys sum to the same value collide systematically, not just by chance.
The solver applies a non-linear mix to each board key before summation.

\subsection{Proof of \texorpdfstring{\cref{lem:static}}{Lemma \ref{lem:static}}}
Let $w$ follow the strategy $\sigma$: if an $o$ stone lies in $T^*$, play on a liberty of an $o$-chain inside $T^*$, and otherwise pass.

\emph{Invariant (I).}
Every $w$-chain that contains an $X^*$ stone has two distinct pure eyes, each bordered only by such chains and each containing at least one empty point.
Initially this holds by the definition of $X^*$.
An eye without an empty point would consist of $o$-chains whose neighbours lie in the eye or are $w$ stones, so they would have no liberty, which is impossible in a legal position.
After an $o$ move at $p$: if $p$ were the last empty point of an eye $E$, the $o$-chain through $p$ would have no liberty and would capture nothing, because every adjacent $w$-chain keeps an empty point in its other eye.
So the move would be suicide.
Otherwise $E$ keeps an empty point.
The move can capture only $w$-chains without an $X^*$ stone, and those border no eye and no region of $T^*$, so the eyes and their purity are unchanged.
After a $\sigma$ move at $q\in T^*$: $q$ is adjacent to $X^*$, so its chain contains $X^*$ stones, and the move is not suicide because that chain keeps the empty point of an eye not containing $q$.
Eyes not containing $q$ are unchanged.
If $q$ lies in an eye $E$, then $E\setminus\{q\}$ together with the captured points is still adjacent to the chain, because $E$ was pure and every captured point lies in $E$.
It still contains an empty point: either $q$ was not $E$'s last empty point, or the $o$-chain whose last liberty $q$ was is captured and its points become empty.

\emph{Permanence.}
By (I), every chain with an $X^*$ stone always has at least two liberties, so $o$ can never capture it.
Every $\sigma$ stone joins such a chain because $T^*$ is adjacent to $X^*$.
So the $X^*$ stones and all $\sigma$ stones are permanent.

\emph{Superko legality of $\sigma$.}
Suppose $w$'s stone at $q$ recreated an earlier board $B$.
Then $B$ has a $w$ stone at $q$.
Within the continuation, $q$ never held a $w$ stone before, because $w$ stones are permanent and $q$ was empty.
So $B$ precedes the current position, at which the point $q\in T^*$ is not occupied by $w$.
Hence a $w$ stone on $q$ was captured during the history, so $q\in\capw_w(H)\cap T^*$, which contradicts (C1).
Moves that are illegal for $o$ only restrict $o$.

\emph{Termination and area.}
Every stone move creates a new board, so the game is finite.
$\sigma$ passes only when $T^*$ holds no $o$ stone, and the game ends after two consecutive passes.
By (C2) it cannot end before $w$ has moved, so one of the last two moves is a $\sigma$ pass made while $T^*$ is clean, and the other is an $o$ pass, which does not change the board.
At the end, every point of $T^*$ is a $w$ stone or an empty point enclosed by the permanent $w$ stones that bordered $T^*$'s regions.
So $w$'s area is at least $|X^*|+|T^*|$.
\hfill$\square$

\paragraph{Pure eyes versus vital regions.}
Benson's classic test asks only that every \emph{empty} point of a region be adjacent to the chain.
Benson's theorem assumes that the owner never moves, while here the owner must capture intruders.
Consider a chain $K$ with a one-point eye $e$ and a classic eye $R$ that holds an $o$-chain $Y$.
Suppose $Y$'s only liberty $a$ touches $K$ and $Y$'s far stones touch only another alive chain.
To own $R$, $w$ plays $a$ and captures $Y$.
The opponent re-enters at a freed point next to $K$ with two liberties (freed points that touch only the other chain), which leaves $K$ in atari on $e$, and captures $K$ next move.
Under the pure condition every freed point touches $K$, so this cannot happen.
The proof above therefore needs the pure test, which costs no more to evaluate than the classic one.

\paragraph{The pass condition (C2).}
After $w$'s own pass, $o$ may pass as well and end the game on the current board.
In the \bx{3}{3} position of \cref{sec:static}, Black has just passed and White passes back, which ends the game 3--3 although Benson's analysis gives Black all nine points.

\subsection{Local Region Games}

\begin{lemma}[Region solver]\label{lem:regions}
Suppose every region (component of non-Black points) is bordered only by Black $X^*$ chains, every Black chain has two pure eyes among the regions, (C1) and (C2) hold for the union of the regions, and Black wins the local game below in every region.
Then Black wins the whole board.
\end{lemma}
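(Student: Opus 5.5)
We build Black's whole-board strategy by composing the local winning strategies, and reuse the invariant and superko arguments from the proof of \cref{lem:static}. Since every region is bordered only by $X^*$ chains and every Black chain has two pure eyes, the regions are independent: a move inside one region changes no other region's points. The one exception is a capture of a Black wall chain, and the local game rules that out because any White capture of a Black stone counts as a White win. So Black's strategy is: when White plays a stone in region $E$, answer in $E$ with the local strategy of $E$ from the current local state; when White passes, answer in some region where the local strategy still has a non-pass move, and otherwise pass. White moves first in every local game and may pass there, which matches the global turn order, because White's passes in the whole game become White's local passes. By (C2), White always moves first after the current position, so the local games start in the right parity.

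Next we check legality and permanence of Black's moves globally. Black's local moves are anchored or interior moves that never lose stones, because the local strategy wins and so never allows a White capture. Black's stones therefore only accumulate, and the walls keep two liberties by invariant (I) of \cref{lem:static}, provided we reprove (I) for this richer set of Black moves. A Black stone at $q$ therefore never repeats a board of the continuation. It could repeat an earlier board of the history only if a Black stone on $q$ had been captured, which (C1) forbids, exactly as in the superko paragraph of \cref{lem:static}. White's superko restrictions only remove White options, and the local game already allows White every move that is legal locally, so each local win covers every legal White line.

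Finally, termination and scoring. Every stone move creates a new board, so the game is finite. At the end, each region is in a terminal local state won by Black, and all Black stones are permanent. We will take ``Black wins the local game'' to mean a local area target whose sum over the regions, plus the wall stones, beats the komi. Area is additive over regions once the walls are fixed, so the local wins add up to a whole-board win.

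The main obstacle is the interleaving. White may switch regions at will, and White's passes must be matched consistently with each local game's notion of a pass and of its end. The danger is that Black, after a White pass, is forced either to move in a region or to pass, which could end the game while some region is not yet settled. We plan to handle this by defining a local game's end as both sides passing locally. The global game then ends only after White's pass followed by Black's pass, and Black passes only when every local strategy prescribes a pass, which means every region is in a locally won final state. We also need that an interior move stays safe when White plays elsewhere; independence of the regions gives this.
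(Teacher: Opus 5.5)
There is a genuine gap, and it is in the ``main obstacle'' you identify. The local game of the lemma ends as soon as Black reaches the won state: no White stone in $R$, all Black stones connected to the walls, every empty point adjacent to them. Black never passes in that game. Your composition gives no rule for what happens after that. White may still play a stone into a region that is already won. There, ``answer with the local strategy from the current local state'' is undefined. Likewise, ``Black passes only when every local strategy prescribes a pass'' refers to a pass the local game does not have. You patch this by redefining the local game to end after two local passes, and a local win as an area target that beats the komi. That proves a statement about a different local game, not the hypothesis of the lemma. The missing idea, which the paper uses, is this: every point of a won region is a permanent Black stone or adjacent to one, so the region meets the territory hypotheses of \cref{lem:static}. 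Black answers any intrusion there with $\sigma$, playing on a liberty of the intruding chain. White can neither make an eye nor capture in that region, and capturing the intruders restores the won state. Once every region is won, Black follows $\sigma$ on the whole board. The full-board area then follows directly from the definition of the won state, with no need to sum local targets.

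Your use of (C2) is also wrong. (C2) does not say that White moves first; Black may be to move, which you can treat as a local White pass. Its role is to stop White from ending the game at once by passing right after Black's pass while regions are unsettled. Correspondingly, your claim that the game ends only after White's pass followed by Black's pass is false. A Black pass followed by a White pass also ends the game. That ending is harmless only because Black passes, under $\sigma$, only when all regions are won and free of White stones, and because (C2) excludes it at the first move. Finally, you defer re-proving invariant (I) for anchored and interior moves but never do it. The paper avoids this step by noting that in a winning local line no Black stone is ever captured, so Black's stones, walls included, only accumulate.
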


\emph{Local game in a region $R$.}
White moves first, and may pass or play any point of $R$ that is not suicide.
Black may play \emph{anchored} points, which are adjacent to the walls or to an earlier anchored stone.
In regions of at most eight points Black may also play interior points, provided the Black chain keeps a liberty.
Any White move that would capture a Black stone counts as a White win of the local game.
Black wins $R$ on reaching a state with no White stone in $R$, all Black stones connected to the walls, and every empty point adjacent to them.
Black never passes in the local game.

\begin{proof}
In a winning line for Black, no Black stone is ever captured, so Black's stones only accumulate.
Each pair of a White move or pass and a Black move adds a permanent stone, so the local game is finite and acyclic.
Its value depends only on the local state, so memoization is sound.
White's local options are a superset of the real ones, since real superko only removes White moves.
Black composes the local strategies: it answers a White move in an unfinished region in that region, after a White pass it moves in an unfinished region, and once every region is won it follows $\sigma$ from the proof of \cref{lem:static}.
A White move into a region that is already won is answered there by $\sigma$.
Every point of a won region is a permanent Black stone or adjacent to one, so the region meets the territory conditions of \cref{lem:static}, whose invariant argument is local to each region because the permanent walls separate the regions.
White can neither make an eye nor capture there, and once its stones are captured the region is won again.
At that point the board consists of $X^*$ chains with their pure eyes and of regions whose points are all Black or adjacent to Black, which are the hypotheses of \cref{lem:static}.
Every Black move adds a permanent stone, so it cannot repeat a board of the continuation, and an earlier board is excluded by (C1) as in \cref{lem:static}.
(C2) prevents an immediate end.
\end{proof}

\subsection{Pruning, Parallel Search, Symmetry and Hashing}

\paragraph{Winner-side pruning.}
Removing some of $w$'s options, namely filling its own single-point eyes and moving inside its own certified territory, can only make proofs of a $w$ win harder, never wrong.
The argument is an induction over the claims in the shared table.
A claim that $w$ wins at a $w$-node rests on one child claimed as a $w$ win, a claim at an opponent node rests on every legal child claimed as a $w$ win, and pruning removes only $w$'s moves.
So every $w$-win claim rests only on $w$-win claims, terminal positions and static leaves.
Claims that the opponent wins can be wrong under pruning, but no $w$-win claim depends on them.
A root result against $w$ is reported as unknown.
Under PSK these moves are not dominated by a pass, because they change the board and can make a later capture legal.
On \bx{1}{12} at komi 1.5, neither pruned search proves its side, and an unpruned proof contains own-eye fills by both sides.

\paragraph{Parallel search.}
Threads share the tables, but a thread concludes that a node is proven only in two ways: through a child classification that applies the tests of \cref{alg:transfer}, or the twin identity test for a path-specific proof, to any proof read from the table, or through the value returned by its own recursive search.
Numbers read from unproven entries of other threads are always nonzero and finite and only steer the search.
Proof entries are self-validating. A reader accepts a slot only if its words combine back to the probe key and the second hash, stored sealed with a non-linear hash of the other words, matches.
A slot mixing two writes, from a torn read or from two writers interleaving, fails the second test except with probability about $2^{-64}$, so it is a miss.
Virtual loss changes only which child is explored.
Hence every proof that a thread stores satisfies the serial invariants, and parallel soundness reduces to serial soundness.

\paragraph{Symmetry.}
A position and its image under a board symmetry $s$ have the same value if their histories correspond under $s$, because legality, superko and scoring are invariant under $s$.
Proofs are stored under canonical keys, and transfer checks are made in the current frame: common stones and territory are mapped back by $s^{-1}$, and the Bloom filter holds canonical hashes, which are invariant.
Two sibling moves $m$ and $s(m)$ are merged only if $s$ maps every board of the current history to itself.
A loss whose proof relied on such a merge is stored path-specifically, because on another path the set of such symmetries can be smaller.

\paragraph{Hash identity.}
The position key is exact on all boards we solve (a unit test checks the linear independence of its generators for 30 points), and a second test checks that the constants of the two alternate slots, and their sum, lie outside the span of these generators.
Proof entries also match on an independent second 64-bit hash, which guards path-specific keys and entries mixed by concurrent writes.
Unproven entries match on 26 check bits plus the bucket index and hold heuristic numbers only.
A false match there gives wrong but nonzero numbers, which steer the search but prove nothing.
A path-specific proof is reused only if the canonical key, two independent 64-bit path signatures and the canonicalizing symmetry all match.
The first signature is part of its key, and the second is combined into its second hash.
The symmetry is needed because two paths with the same set of boards can end on mirror images of each other, which are different superko games.
A collision would have to equate two different board sets for the same position in both signatures, with probability about $2^{-128}$ per comparison.
Every other proof is reused only through the transfer test, which uses only exact data.

% ===========================================================================
\section{Extended Results}\label{app:results}

All runs use the same rules and the same static knowledge, except the \bx{4}{7} run under MIGOS's one-liberty convention and the ablations.
They also share the options that affect efficiency but not soundness. Interior moves in regions of at most eight points, one kept tempo move per node for the pruned side, the learned prior and $\varepsilon=0.3$.
The first line of each log in the artifact gives the build and the full command line.
Coordinates use letters A, B, \ldots\ without I for columns from the left and numbers for rows from the bottom.

\begin{table}[H]
\centering\small
\begin{tabular}{@{}llrr@{}}
\toprule
White reply & Black's answer & Expansions & Time\\
\midrule
pass & pass & 192 & 10\,s\\
A5 & B4 & 1.8\,B & 74\,s\\
B5 & D4 & 2.3\,B & 89\,s\\
C5 & B4 & 2.0\,B & 78\,s\\
B4 & D2 & 1.8\,B & 67\,s\\
C4 & B4 & 1.3\,B & 53\,s\\
\bottomrule
\end{tabular}
\caption{Independent re-solves of \bx{5}{5} after Black C3, one fresh process per symmetry-unique White reply.
All are Black wins at komi 24.5, on 192 threads.}
\label{tab:replies}
\end{table}

\begin{table}[H]
\centering\small
\begin{tabular}{@{}llrr@{}}
\toprule
Black's first move & White's reply & Expansions & Time\\
\midrule
pass & pass & 192 & 50\,s\\
A3 & D3 & 60.1\,B & 42\,min\\
A5 & B3 & 87.4\,B & 1.1\,h\\
A4 & E3 & 100\,B & 1.2\,h\\
B5 & C3 & 146\,B & 1.7\,h\\
C5 & C3 & 166\,B & 2.0\,h\\
B3 & C3 & 69.6\,B & 50\,min\\
C3 & D3 & 200\,B & 2.4\,h\\
B4 & D3 & 168\,B & 2.0\,h\\
C4 & C3 & 238\,B & 2.9\,h\\
\bottomrule
\end{tabular}
\caption{\bx{5}{6} upper bound at komi 4.5, split into one White-pruned proof per symmetry-unique first move of Black.}
\label{tab:5x6}
\end{table}

\begin{table}[H]
\centering\small
\begin{tabular}{@{}llrr@{}}
\toprule
First move & Reply & Expansions & Time\\
\midrule
pass & pass & 192 & 11\,s\\
A3 & D2 & 9.4\,B & 7\,min\\
B3 & H2 & 10.4\,B & 8\,min\\
C3 & G2 & 15.3\,B & 11\,min\\
D3 & H2 & 10.6\,B & 8\,min\\
E3 & G2 & 10.9\,B & 8\,min\\
A2 & G2 & 4.3\,B & 3\,min\\
B2 & H2 & 3.1\,B & 150\,s\\
C2 & G2 & 5.3\,B & 4\,min\\
D2 & G2 & 10.0\,B & 7\,min\\
E2 & G2 & 17.6\,B & 12\,min\\
\bottomrule
\end{tabular}
\caption{\bx{3}{10} upper bound at komi 8.5, one White-pruned proof per symmetry-unique first move of Black.
After D2, Black wins at 7.5 (\cref{tab:values}), so \bx{3}{10} is worth 8.}
\label{tab:3x10}
\end{table}

\begin{table}[H]
\centering\small
\begin{tabular}{@{}lL{3.2cm}lr@{}}
\toprule
Komi & Run & Result & Exp.\\
\midrule
27.5 & White-pruned, one-liberty convention & White & 14.6\,B\\
23.5 & White-pruned, after Black D3 & White (C2) & 13.6\,B\\
12.5 & unpruned & White & 16.4\,B\\
11.5 & unpruned & Black (D3) & 86.6\,B\\
11.5 & Black-pruned & Black (D3) & 65.6\,B\\
\bottomrule
\end{tabular}
\caption{\bx{4}{7} runs, each on 192 threads.
A pruned run proves only a win for its own side.}
\label{tab:4x7runs}
\end{table}

\begin{table}[H]
\centering\small
\begin{tabular}{@{}rrlrr@{}}
\toprule
$n$ & Value & Black's move & Black proof & White proof\\
\midrule
4 & 4 & 2nd point & 207 & (full)\\
5 & 0 & 2nd & 421 & 493\\
6 & 1 & 2nd & 1{,}148 & 1{,}109\\
7 & 2 & 4th & 461 & 3{,}334\\
8 & 3 & 2nd & 8{,}177 & 19{,}937\\
9 & 0 & 2nd & 5{,}134 & 20{,}477\\
10 & 1 & 2nd & 24{,}885 & 120{,}077\\
11 & 2 & 6th & 35{,}264 & 337{,}294\\
12 & 2 & 2nd & 1.53\,B & 26.0\,M\\
13 & 0 & 2nd & 409{,}368 & 87.9\,M\\
14 & 1 & 2nd & 4.06\,M & 566\,M\\
15 & 2 & 8th & 282\,M & 4.36\,B\\
16 & 1 & 2nd & 792\,M & 475\,B\\
17 & 0 & 2nd & 7.01\,B & 4{,}757\,B\\
18 & 1 & 2nd & 108\,B & 7{,}748\,B\\
\bottomrule
\end{tabular}
\caption{Linear Go \bx{1}{n} under PSK, expansions per proof.
All proofs use unpruned search on 192 threads.}
\label{tab:linear}
\end{table}

% ===========================================================================
\section{The \texorpdfstring{\bx{4}{7}}{4x7} Proof Chain}\label{app:4x7}

MIGOS's published optimal line (Figures 4a and 4b of~\cite{vanderwerf2009rect}) is 1~D3, 2~C2, 3~C3, 4~E2, 5~D2, 6~E3, 7~F1, 8~E1, 9~F3, 10~F2, 11~E4, 12~F4, 13~G4, 14~D4, 15~C4, 16~G3, 17~G2, 18~G1, 19~F4, 20~G3, 21~D4, 22~B2, 23~B3, 24~D1, 25~A3, 26~A2, 27~C1, 28~B1, 29~G2, 30~A4, 31~F1.
Every move is legal under PSK, and no move along the line is forbidden by superko alone.
\Cref{tab:chain} lists the small proofs behind \cref{sec:4x7}.

\begin{table}[H]
\centering\small
\begin{tabular}{@{}L{3.0cm}rlr@{}}
\toprule
Position & Komi & Result & Exp.\\
\midrule
after move 31 & 27.5 & Black & 9{,}318\\
after move 26 & 27.5 & Black (C1) & 134{,}266\\
after move 24 & 27.5 & unknown & 9{,}660\\
after move 25 & 27.5 & White (G2) & 889\\
after 25, G2 & 2.5 & Black (A2) & 436\\
after 25, G2 & 3.5 & White & 495\\
after G2, B1, C1, A2, A1 & 27.5 & White & 73\\
after G2, B1, C1, A2, pass, F1 & 27.5 & Black & 77{,}411\\
\bottomrule
\end{tabular}
\caption{The chain of small proofs for \bx{4}{7}.
Black results come from Black-pruned runs and White results from White-pruned runs, so each is trusted for its winner.}
\label{tab:chain}
\end{table}

After move 25, the exact values of White's moves are $+3$ for G2 and C1 and $+28$ for every other move, including MIGOS's A2 and a pass.
After G2, Black's replies A2, C1 and B1 are worth $+3$, $+1$ and $0$, and every other Black move, including a pass, loses Black's large group.
The optimal PSK continuation is B~A2, W~C1, B~A1, W~B4, B~A4 (capturing), and two passes, which leaves the seki of \cref{fig:4x7}c (Black 15, White 12, B1 neutral).

% ===========================================================================
\section{Validation Summary}\label{app:validation}

\begin{table}[H]
\centering\small
\begin{tabular}{@{}L{4.5cm}L{2.8cm}@{}}
\toprule
Check & Result\\
\midrule
Selftests, 7 small boards, 3 oracle modes & 10{,}808 positions, 0 mismatches\\
Threshold probes, 9 boards up to \bx{5}{5} & 31{,}622 probes, 0 counterexamples\\
Brute-force minimax with an exact memo against shared-table solves (\bx{1}{4}, \bx{1}{5}, \bx{2}{3}) & 7{,}538 positions, 0 mismatches\\
Large-board selftests (\bx{4}{7}, \bx{5}{6}, \bx{3}{10}, \bx{2}{14}, \bx{3}{9}, \bx{4}{6}, \bx{5}{5}, \bx{3}{7}, \bx{4}{5}, three solver modes) & 28{,}033 positions, 0 mismatches\\
Proof entries rejected by the second hash (path-specific collisions or mixed writes) & up to 552 per run (\bx{1}{18})\\
\bottomrule
\end{tabular}
\caption{Validation of the solver used for all reported results.}
\label{tab:validation}
\end{table}

% ===========================================================================
\section{Availability of Prior Solvers and Reproduction}\label{app:repro}

\Cref{tab:availability} summarizes which solvers behind published Go results are publicly available.
The entries reflect public repositories and the authors' web pages as of 2026.

\begin{table}[H]
\centering\small
\setlength{\tabcolsep}{3pt}
\begin{tabular}{@{}L{2.5cm}L{2.1cm}L{3.3cm}@{}}
\toprule
Work & Result & Public artifact\\
\midrule
MIGOS~\cite{vanderwerf2003small} & \bx{5}{5}, basic ko & none known\\
MIGOS~\cite{vanderwerf2009rect} & most boards up to 30 points, basic ko & none known, selected variations in the paper\\
Positional Linear Go~\cite{weninger2017plgo} & \bx{1}{n}, $n\le 9$, PSK & none known\\
EWS~\cite{randall2024ews} & \bx{5}{5}, PSK & minimal example\footnotemark{} that solves up to \bx{3}{3}, \bx{5}{5} solver not released\\
Killall-Go~\cite{wu2023online} & \bx{7}{7} openings & solver and solution trees, different problem\\
Sanko (this work) & \Cref{tab:values} & solver source, command line and outcome of every run\\
\bottomrule
\end{tabular}
\caption{Public availability of the solvers behind published Go results.}
\label{tab:availability}
\end{table}
\footnotetext{\url{https://github.com/davidowe/EWS-Python}}

The artifact\footnote{\url{https://github.com/cookiephone/sanko}} contains the solver source and the command line and outcome of every solver run behind the reported values and measurements.
Its README lists the commands that reproduce the \bx{4}{7} proof chain of \cref{tab:chain} in under twenty seconds.

\end{document}